\newtheorem{Theorem}{Theorem}[section]
\newtheorem{Lemma}[Theorem]{Lemma}
\newtheorem{Proposition}[Theorem]{Proposition}
\theoremstyle{definition}
\theoremstyle{definition}
\newtheorem{Definition}[Theorem]{Definition}
\theoremstyle{remark}
\numberwithin{equation}{section}
\newenvironment{equations}{\equation\aligned}{\endaligned\endequation}
\newenvironment{equations*}{\equation\aligned\nonumber}{\endaligned\endequation}
\DeclareMathOperator*{\argmin}{\arg\!\min}
\newcommand{\ceil}[1]{\left\lceil {#1} \right\rceil}
\newcommand{\floor}[1]{\left\lfloor {#1} \right\rfloor}
\newcommand{\indi}[1]{\mathds{1}\left( {#1} \right)}
\def\one{\mathbf 1}
\def\d{\textup{d}}
\def\e{\textup{e}}
\def\N{\mathsf{N}}
\def\E{\mathsf{E}}
\def\P{\mathsf{P}}
\def\R{\mathbb{R}}
\def\greekvectors#1{%
 \@for\next:=#1\do{%
    \def\X##1;{\expandafter\def\csname b##1\endcsname{\bm{\csname##1\endcsname}}}
    \expandafter\X\next;}
 \@for\next:=#1\do{%
    \def\X##1;{\expandafter\def\csname h##1\endcsname{\widehat{\csname##1\endcsname}}}
    \expandafter\X\next;}
 \@for\next:=#1\do{%
    \def\X##1;{\expandafter\def\csname c##1\endcsname{\check{\csname##1\endcsname}}}
    \expandafter\X\next;}
 \@for\next:=#1\do{%
    \def\X##1;{\expandafter\def\csname hb##1\endcsname{\widehat{\bm{\csname##1\endcsname}}}}
    \expandafter\X\next;}
}
    \def\command@factory#1{\expandafter\def\csname #1\endcsname{\mathbf{#1}} }
    \def\command@factory#1{\expandafter\def\csname t#1\endcsname{\widetilde{#1}} }
    \def\command@factory#1{\expandafter\def\csname tb#1\endcsname{\tilde{\mathbf{#1}}} }
    \def\command@factory#1{\expandafter\def\csname hb#1\endcsname{\widehat{\mathbf{#1}}} }
    \def\command@factory#1{\expandafter\def\csname b#1\endcsname{\mathbbm{#1}} }
    \def\command@factory#1{\expandafter\def\csname c#1\endcsname{\mathcal{#1}} }
    \def\command@factory#1{\expandafter\def\csname f#1\endcsname{\mathfrak{#1}} }
    \def\command@factory#1{\expandafter\def\csname s#1\endcsname{\mathsf{#1}} }
\begin{document}

\title{\textsc{Smooth function approximation by deep neural networks with general activation functions}\footnote{This work was supported by Samsung Science and Technology Foundation under Project Number SSTF-BA1601-02.}}
\author{Ilsang \textsc{Ohn} and Yongdai \textsc{Kim}\footnote{Corresponding author. E-mail: ydkim0903@gmail.com}}
\affil{\textit{Department of Statistics, Seoul National University}}
\maketitle

\begin{abstract}
There has been a growing interest in expressivity of deep neural networks. However, most of the existing work about this topic focuses only on the specific activation function such as ReLU or sigmoid. In this paper, we investigate the approximation ability of deep neural networks with a broad class of activation functions. This class of activation functions includes most of frequently used activation functions. We derive the required depth, width and sparsity of a deep neural network to approximate any H\"older smooth function upto a given approximation error for the large class of activation functions.  Based on our approximation error analysis, we derive the minimax optimality of the deep neural network estimators with the general activation functions in both regression and classification problems.

\null\noindent
\textbf{Keywords:} function approximation; deep neural networks; activation functions; H\"older continuity; convergence rates
\end{abstract}

\section{Introduction}

Neural networks are learning machines motivated by the architecture of the human brain. Neural networks are comprised of multiple hidden layers, and each of the hidden layers has multiple hidden nodes which consist of an affine map of the outputs from the previous layer and a nonlinear map called an {activation function}. Deep neural networks have been leading tremendous success in various pattern recognition and machine learning tasks such as object recognition, image segmentation, machine translation and others. For an overview on the empirical success of deep neural networks, we refer to the review paper \citep{lecun2015deep} and recent book \citep{goodfellow2016deep}.

Inspired by the success of deep neural networks, many researchers have tried to give theoretical supports for the success of deep neural networks. Much of the work upto date has focused on the expressivity of deep neural networks, i.e., ability to approximate a rich class of functions efficiently.
The well-known classical result on this topic is the universal approximation theorem, which states that every continuous function can be approximated arbitrarily well by a neural network \citep{cybenko1989approximation, hornik1989multilayer, funahashi1989approximate, chui1992approximation, leshno1993multilayer}. But these results do not specify the required numbers of layers and nodes of a neural network to achieve a given approximation~accuracy. 

Recently, several results about the effects of the numbers of layers and nodes of a deep neural network to its expressivity have been reported. They provide upper bounds of the numbers of layers and nodes required for neural networks to uniformly approximate all functions of interest. Examples of a class of functions include the space of rational functions of polynomials \citep{telgarsky2017neural}, the H\"older space~\citep{yarotsky2017error, schmidt2017nonparametric, bauer2019deep, li2019better}, Besov and mixed Besov spaces~\citep{suzuki2018adaptivity} and even a class of discontinuous functions \citep{petersen2018optimal, imaizumi2018deep}. 

The nonlinear activation function is a central part that makes neural networks differ from the linear models, that is, a neural network becomes a linear function if the linear activation function is used. Therefore, the choice of an activation function substantially influences on the performance and computational efficiency. Numerous activation functions have been suggested to improve neural network learning \citep{bergstra2009quadratic, clevert2015fast, carlile2017improving,  ramachandran2017searching, klimek2018neural, wuraola2018sqnl}. We refer to the papers \citep{glorot2011deep, ramachandran2017searching} for an overview of this topic.

There are also many recent theoretical studies about the approximation ability of deep neural networks. However, most of the studies focus on a specific type of the activation function such as ReLU \citep{yarotsky2017error, schmidt2017nonparametric, petersen2018optimal, imaizumi2018deep, suzuki2018adaptivity}, or small classes of activation functions such as sigmoidal functions with additional monotonicity, continuity, and/or boundedness conditions \citep{mhaskar1993approximation, costarelli2017saturation, costarelli2018solving, costarelli2018estimates, costarelli2018approximation} and $m$-admissible functions which are sufficiently smooth and bounded \citep{bauer2019deep}. For definitions of sigmoidal and $m$-admissible functions, see \citep{costarelli2017saturation} and \citep{bauer2019deep}, respectively. Thus a unified theoretical framework still lacks. 

In this paper, we investigate the approximation ability of deep neural networks with a quite general class of activation functions. We derive the required numbers of layers and nodes of a deep neural network to approximate  any H\"older smooth function upto a given approximation error for the large class of activation functions. Our specified class of activation functions and the corresponding approximation ability of deep neural networks include most of previous results~\citep{yarotsky2017error, schmidt2017nonparametric, mhaskar1993approximation, bauer2019deep} as special~cases.

Our general theoretical results of the approximation ability of deep neural networks enables us to study statistical properties of deep neural networks. \citet{schmidt2017nonparametric} and Kim et al.~\cite{kim2018fast} proved the minimax optimality of a deep neural network estimator with the ReLU activation function in regression and classification problems, respectively. In this paper, we derive similar results for general activation functions.

This paper is structured as follows. In Section \ref{sec:dnn}, we introduce some notions about deep neural networks. In Section \ref{sec:act}, we introduce two large classes of activation functions. In Section \ref{sec:approx}, we present our main result on the approximation ability of a deep neural network with the general activation function considered in Section \ref{sec:act}. In Section \ref{sec:slt}, we apply the result in Section \ref{sec:approx} to the supervised learning problems of regression and classification. Conclusions are given in Section \ref{sec:con}. The proofs of all  results are given in Appendix.

\subsection*{Notation}

We denote by $\indi{\cdot}$ the indicator function. Let $\R$ be the set of real numbers and $\bN$ be the set of natural numbers. For a real valued vector $\x\equiv(x_1,\dots, x_d)$, we let $\abs{\x}_0:=\sum_{j=1}^d\mathds{1}(x_j\neq 0)$, $\abs{\x}_p:=\del{\sum_{j=1}^d|x_j|^p}^{1/p}$ for $p\in[1,\infty)$ and $\abs{\x}_\infty:=\max_{1\le j\le d}|x_j|$. For simplicity, we let $|\x|:=|\x|_1.$ For a real valued function $f(x):\R\to \R$, we let $f'(a), f''(a)$ and $f'''(a)$ are the first, second and third order derivatives of $f$ at $a$, respectively. We let $f'(a+):=\lim_{\epsilon\downarrow0}(f(a+\epsilon)-f(a))/\epsilon$ and $f'(a-):=\lim_{\epsilon\downarrow0}(f(a-\epsilon)-f(a))/\epsilon$.  For $x\in\R$, we write $(x)_+:=\max\{x,0\}$.

\section{Deep Neural Networks}
\label{sec:dnn}

In this section we provide a mathematical representation of deep neural networks. A neural network with $L\in\bN$ layers, $n_l\in\bN$ many nodes at the $l$-th hidden layer for $l=1,\dots, L$, input of dimension $n_0$, output of dimension $n_{L+1}$ and nonlinear activation function $\sigma:\R\to\R$ is expressed as
    \begin{equation}
    \label{eq:nn}
        N_\sigma(\x|\btheta):=\sA_{L+1}\circ\sigma_L\circ\sA_{L}\circ\cdots \circ\sigma_1\circ\sA_1(\x),
    \end{equation}
where $\sA_l:\R^{n_{l-1}}\to \R^{n_l}$ is an affine linear map defined by $\sA_l(\x)=\W_l\x+\b_l$ for given  $n_l\times n_{l-1}$ dimensional weight matrix $\W_l$ and $n_l$ dimensional bias vector $\b_l$ and $\sigma_l:\R^{n_l}\to\R^{n_l}$ is an element-wise nonlinear activation map defined by $\sigma_l(\z):=(\sigma(z_1),\dots, \sigma(z_{n_l}))^\top$. Here, $\btheta$ denotes the set of all weight matrices and bias vectors  $\btheta:=\del[1]{(\W_1,\b_1),(\W_2,\b_2),\dots, (\W_{L+1}, \b_{L+1})}, $ 
which we call $\btheta$ the parameter of the neural network, or simply, a {network parameter}.
 
We introduce some notations related to the network parameter. For a network parameter $\btheta$, we write $L(\btheta)$ for the number of hidden layers of the corresponding neural network, and write $n_{\max}(\btheta)$ for the maximum of the numbers of hidden nodes at each layer. Following a standard convention, we say that $L(\btheta)$ is the {depth} of the deep neural network and $n_{\max}(\btheta)$ is the {width} of the deep neural network. We let $\abs{\btheta}_0$ be the number of nonzero elements of $\btheta$, i.e.,
      \begin{equation*}
         \abs{\btheta}_0:=\sum_{l=1}^{L+1}\left( \abs{\text{vec}(\W_l)}_0 +\abs{\b_l}_0\right),
      \end{equation*}
where $\text{vec}(\W_l)$ transforms the matrix $\W_l$ into the corresponding vector by concatenating the column vectors. We call $\abs{\btheta}_0$ {sparsity} of the deep neural network. Let $\abs{\btheta}_\infty$ be the largest absolute value of elements of $\btheta$, i.e., 
    \begin{equation*}
        \abs{\btheta}_\infty :=\max \left\{ \max_{1\le l\le L+1} \abs{\text{vec}(\W_l)}_\infty, 
\max_{1\le l\le L+1} \abs{\b_l}_\infty\right\}.
    \end{equation*}
We call $\abs{\btheta}_\infty$ {magnitude} of the deep neural network. We let $\textsf{in}(\btheta)$ and $\textsf{out}(\btheta)$ be the input and output dimensions of the deep neural network, respectively. We denote by $\Theta_{d,o}(L,N)$ the set of network parameters with depth $L$, width $N$, input dimension $d$ and output dimension $o$,  that is,
    \begin{equation*}
        \Theta_{d,o}(L, N):=\cbr{\btheta:L(\btheta)\le L, n_{\max}(\btheta)\le N, \textsf{in}(\btheta)=d, \textsf{out}(\btheta)=o}.
    \end{equation*}    
We further define a subset of $\Theta_{d,o}(L,N)$ with restrictions on sparsity and magnitude as
    \begin{equation*}
        \Theta_{d,o}(L, N, S, B):=\cbr{\btheta\in\Theta_{d,o}(L,N): \abs{\btheta}_0\le S,\abs{\btheta}_\infty \le B}.
    \end{equation*}

\section{Classes of Activation Functions}
\label{sec:act}

In this section, we consider two classes of activation functions. These two classes include most of commonly used activation functions. Definitions and examples of each class of activation functions  are provided in the consecutive two subsections.

\subsection{Piecewise Linear Activation Functions}

We first consider piecewise linear activation functions.

\begin{Definition}
A  function $\sigma:\R\to\R$ is continuous piecewise linear if it is continuous and there exist a finite number of break points $a_1\le a_2\le \cdots\le  a_K\in\R$ with $K\in\bN$ such that  $\sigma'(a_k-)\neq \sigma'(a_k+)$ for every $k=1,\dots, K$ and $\sigma(x)$ is linear on $(-\infty,a_1], [a_1,a_2],\dots, [a_{K-1}, a_K], [a_K, \infty)$.
\end{Definition}

Throughout this paper, we write ``picewise linear'' instead of ``continuous picewise linear'' for notational simplicity unless there is a confusion. The representative examples of piecewise linear activation functions are as follows:
    \begin{itemize}
        \item ReLU: $\sigma(x)=\max\{x,0\}$.
        \item Leaky ReLU: : $\sigma(x)=\max\{x,ax\}$ for $a\in(0,1)$.
    \end{itemize}

The ReLU activation function is the most popular choice in practical applications due to better gradient propagation and efficient computation \citep{glorot2011deep}. In this reason, most of the recent results on the function approximation by deep neural networks are based on the ReLU activation \mbox{function \citep{yarotsky2017error, schmidt2017nonparametric, petersen2018optimal, imaizumi2018deep, suzuki2018adaptivity}}. In Section \ref{sec:approx}, as \citet{yarotsky2017error} did, we extend these results to any continuous piecewise linear activation function by showing that the ReLU activation function can be exactly represented by a linear combination of piecewise linear activation functions. A formal proof for this argument is presented in Appendix \ref{sec:proof1}.

\subsection{Locally Quadratic Activation Functions}

One of the basic building blocks in approximation by deep neural networks is the square function, which should be approximated precisely. Piecewise linear activation functions have zero curvature (i.e., constant first-order derivative) inside each interval divided by its break points, which makes it relatively difficult to approximate the square function efficiently. But if there is an interval on which the activation function has nonzero curvature, the square function can be approximated more efficiently, which is a main motivation of considering a new class of activation functions that we call locally quadratic.

\begin{Definition}
A function $\sigma:\R\to\R$ is locally quadratic if there exits an open interval $(a,b)\subset\R$ on which $\sigma$ is three times continuously differentiable with bounded derivatives and there exists $t\in(a,b)$ such that $\sigma'(t)\neq0$ and $\sigma''(t)\neq0$. 
\end{Definition}

We now give examples of locally quadratic activation functions. First of all, any nonlinear smooth activation function with nonzero second derivative, is locally quadratic. Examples are:

    \begin{itemize}
        \item Sigmoid: $\displaystyle \sigma (x)=\frac{1}{1+\e^{-x}}.$   \medskip
        \item Tangent hyperbolic: $\displaystyle \sigma(x)=\frac{\e^x-\e^{-x}}{\e^x+\e^{-x}}.$  \medskip
        \item Inverse square root unit (ISRU) \citep{carlile2017improving}: 
            $\displaystyle  \sigma(x)=\frac{x}{\sqrt{1+ax^2}}$ for $a>0$. \medskip
        \item Soft clipping \citep{klimek2018neural}: 
            $\displaystyle \sigma(x)=\frac{1}{a}\log \del{\frac{1+\e^{ax}}{1+\e^{a(x-1)}}}$ for $a>0$.  \medskip
        \item SoftPlus \citep{glorot2011deep}: $\sigma(x)=\log (1+\e^x)$.  \medskip
         \item Swish \citep{ramachandran2017searching}: $\displaystyle \sigma(x)=\frac{x}{1+\e^{-x}}$.
    \end{itemize}
    
In addition, piecewise smooth function having nonzero second derivative on at least one piece, is also locally quadratic. Examples are:
    \begin{itemize}
        \item Rectified power unit (RePU) \citep{li2019better}: 
            $\displaystyle \sigma(x)=\max\{x^k,0\}$ for $k\in \bN\setminus\{1\}$. \medskip
        \item Exponential linear unit (ELU) \citep{clevert2015fast}: 
            $\displaystyle \sigma(x)=a(\e^x-1)\indi{x\le0}+ x\indi{x>0}$ for $a>0$. \medskip
		\item Inverse square root linear unit  (ISRLU) \citep{carlile2017improving}: 
            $\displaystyle  \sigma(x)=\frac{x}{\sqrt{1+ax^2}}\indi{x\le0}+x\indi{x>0}$ for $a>0$. \medskip
        \item Softsign \citep{bergstra2009quadratic}: 
            $\displaystyle \sigma(x)=\frac{x}{1+|x|}.$  \medskip
        \item Square nonlinearity \citep{wuraola2018sqnl}:\\
         $\displaystyle \sigma(x)=\indi{x>2}+(x-x^2/4)\indi{0\le x\le2}+(x+x^2/4)\indi{-2\le x<0} -\indi{x<-2}$.
    \end{itemize}

\section{Approximation of Smooth Functions by Deep Neural Networks}
\label{sec:approx}

In this section we introduce the function class we consider and show the approximation ability of the deep neural networks with a activation function considered in Section \ref{sec:act}.

\subsection{H\"older Smooth Functions}

We recall the definition of H\"older smooth functions. For a $d$-dimensional multiple index \mbox{$\m\equiv(m_1,\dots, m_d)\in \bN_0^d$} where $\bN_0:=\bN\cup\{0\}$, we let $\x^{\m}:=x_1^{m_1}\cdots x_d^{m_d}$ for $\x\in\R^d$. 
For a function $f:\cX\to\R$, where $\cX$ denotes the domain of the function, we let $\|f\|_\infty:=\sup_{\x\in\cX}|f(\x)|$. We use~notation
    \begin{equation*}
        \partial^{\m}f:=\frac{\partial^{|\m|}f}{\partial \x^{\m}}=\frac{\partial^{|\m|}f}{\partial x_1^{m_1} \cdots\partial x_d^{m_d}},
    \end{equation*}
for $\m\in \bN_0^d$ to denote a derivative of $f$ of order $\m$. We denote by $\cC^m(\cX)$, the space of $m$ times differentiable functions on $\cX$ whose partial derivatives of order $\m$ with $|\m|\le m$ are continuous. We define the H\"older coefficient of order  $s\in(0,1]$ as
    \begin{equation*}
        [f]_{s}:=\sup_{\x_1,\x_2\in \cX, \x_1\neq\x_2 }\frac{|f(\x_1)-f(\x_2)|}{|\x_1-\x_2|^s}.
    \end{equation*}
    
For a positive real value $\alpha$, the H\"older space of order $\alpha$ is defined as 
	\begin{equation*}
	\cH^{\alpha}(\cX):=\left\{f\in \cC^{\floor{\alpha}}(\cX):\|f\|_{\cH^{\alpha}(\cX)}< \infty\right\},
	\end{equation*}
where $\|f\|_{\cH^{\alpha}(\cX)}$ denotes the H\"older norm defined by
    \begin{equation*}
        \|f\|_{\cH^{\alpha}(\cX)}
        :=\sum_{\m\in\bN_0^d:|\m|\le \floor{\alpha}}\|\partial^{\m}f\|_{\infty}
        +\sum_{\m\in\bN_0^d:|\m|= \floor{\alpha}}[\partial^{\m}f]_{\alpha- \floor{\alpha}}.
    \end{equation*}
    
We denote by $\cH^{\alpha, R}(\cX)$ the closed ball in the H\"older space of radius $R$ with respect to the H\"older norm, i.e.,
    \begin{equation*}
        \cH^{\alpha, R}(\cX)
        :=\left\{f\in\cH^{\alpha}(\cX): \|f\|_{\cH^{\alpha}(\cX)}\le R\right\}.
    \end{equation*}

\subsection{Approximation of H\"older Smooth Functions}

We present our main theorem in this section. 

\begin{Theorem}
\label{thm:main1}
Let $d\in\bN$, $\alpha>0$ and $R>0$. Let the activation function $\sigma$ be either continuous piecewise linear or locally quadratic. Let $f\in\cH^{\alpha, R}([0,1]^d)$. Then there exist positive constants $L_0$, $N_0$, $S_0$ and $B_0$ depending only on $d$, $\alpha$, $R$ and $\sigma$  such that, for any $\epsilon>0$, there is a neural network 
    \begin{equation}
        \btheta_\epsilon\in  \Theta_{d,1}\del{L_0\log(1/\epsilon),N_0\epsilon^{-d/\alpha},S_0\epsilon^{-d/\alpha}\log(1/\epsilon), B_0\epsilon^{-4(d/\alpha+1)}}
    \end{equation}
satisfying
    \begin{equation}
        \sup_{\x\in[0,1]^d}\abs{f(\x)-N_\sigma(\x|\btheta_\epsilon)}\le \epsilon.
    \end{equation}
\end{Theorem}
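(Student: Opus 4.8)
The plan is to reduce everything to a small set of elementary network gadgets—chiefly an approximate squaring unit—and then assemble a global approximant from localized Taylor polynomials glued by a partition of unity; the assembly is identical for both activation classes, so only the gadgets are built differently. For a piecewise linear $\sigma$ there is at least one break point $a_k$ with $\sigma'(a_k-)\neq\sigma'(a_k+)$, so a finite linear combination of rescaled shifts $x\mapsto\sigma(sx+u)$ cancels all kinks but one and reproduces the ReLU $(x)_+$ exactly (the representation referenced in Section~\ref{sec:act} and proved in Appendix~\ref{sec:proof1}). With ReLU available exactly at the cost of only a constant factor in width and sparsity, I would invoke the ReLU machinery of \citet{yarotsky2017error} and \citet{schmidt2017nonparametric}: the map $x\mapsto x^2$ on $[0,1]$ is approximated to accuracy $\epsilon$ by a ReLU network of depth and size $O(\log(1/\epsilon))$ via the sawtooth self-composition trick, and this is the only source of the logarithmic depth in this case.

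For a locally quadratic $\sigma$ I would instead build the square directly from the local curvature. Fixing $t\in(a,b)$ with $\sigma'(t)\neq0$ and $\sigma''(t)\neq0$ and Taylor-expanding to third order on the interval where $\sigma\in\cC^3$, the first difference $\del{\sigma(t+\lambda x)-\sigma(t)}/(\lambda\sigma'(t))=x+O(\lambda)$ yields an approximate affine/identity gate for carrying values through layers, while the symmetric second difference
\begin{equation*}
\frac{\sigma(t+\lambda x)-2\sigma(t)+\sigma(t-\lambda x)}{\lambda^2\,\sigma''(t)} = x^2 + O(\lambda)
\end{equation*}
gives a three-node, single-layer unit computing $x^2$ to error $O(\lambda)$ uniformly on any bounded set, provided the arguments remain in $(a,b)$, which is arranged by taking $\lambda$ small. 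Driving $\lambda$ polynomially small in $\epsilon$ pushes the squaring error below threshold at the price of weights of order $\lambda^{-2}$, and this is precisely what forces the magnitude bound $B_0\epsilon^{-4(d/\alpha+1)}$; because this squaring unit has bounded depth, the logarithmic depth here arises only from the subsequent composition of products.

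From either squaring unit the remaining steps are common. Polarization $xy=\tfrac14\del{(x+y)^2-(x-y)^2}$ gives an approximate multiplication gate, and iterating it builds approximate monomials $\x^{\m}$ with $|\m|\le\floor{\alpha}$ and hence arbitrary local polynomials, the errors staying controlled because all inputs lie in a bounded range on which each gate is Lipschitz. I would partition $[0,1]^d$ into a grid of side $\delta\sim\epsilon^{1/\alpha}$, so that on each cell the order-$\floor{\alpha}$ Taylor polynomial of $f$ approximates $f$ to accuracy $O(\delta^\alpha)=O(\epsilon)$ by Hölder smoothness, and use a partition of unity of localized bumps (built from the same gates) to select the correct local polynomial. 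Counting $O(\delta^{-d})=O(\epsilon^{-d/\alpha})$ cells computed in parallel yields width $N_0\epsilon^{-d/\alpha}$, sparsity $S_0\epsilon^{-d/\alpha}\log(1/\epsilon)$ after accounting for the depth-$O(\log(1/\epsilon))$ product subnetworks, and depth $L_0\log(1/\epsilon)$.

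The main obstacle is the interplay of error propagation with magnitude in the locally quadratic case. Since the approximate square is accurate only while its argument lies in $(a,b)$, and since products of approximants feed into further products along an $O(\log(1/\epsilon))$-depth tree, I must simultaneously keep every intermediate quantity inside the admissible window by pre-scaling, prevent the per-gate error—amplified by the $\lambda^{-2}$ weights—from accumulating beyond $\epsilon$, and verify that the resulting magnitudes do not exceed $\epsilon^{-4(d/\alpha+1)}$. Choosing $\lambda$, the grid size $\delta$, and the truncation levels consistently so that all three constraints hold at once is the delicate part; the piecewise linear case sidesteps it entirely, since ReLU is reproduced exactly and all weights remain polynomially bounded.
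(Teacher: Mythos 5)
Your treatment of the piecewise linear case is essentially the paper's: two shifted and rescaled copies of $\sigma$ around a break point reproduce the ReLU exactly on a bounded range (Lemma \ref{lem:piece}), after which the ReLU approximation theorem of \citet{schmidt2017nonparametric} (Lemma \ref{lem:reluapprox}) finishes the job. Likewise, your second-difference squaring gate, polarization product, and binary product tree for monomials coincide with \ref{lem:sq}--\ref{lem:mono} of Lemma \ref{lem:basic}.

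The genuine gap is in the locally quadratic case, at the point you compress into ``a partition of unity of localized bumps (built from the same gates).'' The gates you have constructed---approximate identity, square, and product, all obtained from local Taylor expansions of $\sigma$ on $(a,b)$---compute approximate polynomials, and compositions of them are again approximate polynomials of $\epsilon$-independent degree. Such functions cannot be localized: they can neither vanish nor decay rapidly away from a grid cell, so with $(M+1)^d\asymp\epsilon^{-d/\alpha}$ cells the cross-cell contamination cannot be driven below $\epsilon$. You also cannot fall back on the classical sigmoid-difference bumps, because the locally quadratic class assumes nothing about tail behavior (SoftPlus, Swish and RePU are unbounded). The paper's resolution of exactly this point is the technical core of its proof: approximate $\sqrt{x}$ by a network that evaluates a Taylor polynomial of degree $J=\ceil{\log K}$ with depth of order $J$ (\ref{lem:sqrt} of Lemma \ref{lem:basic}), deduce $|x|\approx\sqrt{x^2+\zeta^2}$ (\ref{lem:abs}), then the ReLU via $(x+|x|)/2$, and only then assemble the hat functions $\phi_{\z,M}(\x)=\prod_{j=1}^d(1/M-|x_j-z_j|)_+$. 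This is the source of the $\log(1/\epsilon)$ depth; your claim that the logarithmic depth ``arises only from the subsequent composition of products'' is incorrect (the product trees have constant depth $\ceil{\log_2\alpha}+\ceil{\log_2 d}$), and that misattribution is precisely the symptom of the missing construction. Finally, the magnitude exponent is fixed not by the squaring gate alone but by error accumulation across cells: the per-cell error of order $1/\sqrt{K}$ from the absolute-value gadget must beat $\epsilon\,(M+1)^{-d}$, forcing $K\gtrsim\epsilon^{-2(d/\alpha+1)}$ and hence $\abs{\btheta}_\infty\le K^2=\epsilon^{-4(d/\alpha+1)}$.
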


The result of Theorem \ref{thm:main1} is equivalent to the results on the approximation by ReLU neural \mbox{networks~\citep{yarotsky2017error, schmidt2017nonparametric}} in a sense that the upper bounds of the depth, width and sparsity are the same orders of those for ReLU, namely, depth $= O(\log(1/\epsilon))$, width $= O(\epsilon^{-d/\alpha})$ and sparsity $=O(\epsilon^{-d/\alpha}\log (1/\epsilon))$.  We remark that each upper bound is equivalent to the corresponding lower bound established by \citep{yarotsky2017error} up to logarithmic factor. 

For piecewise linear activation functions,  \citet{yarotsky2017error} derived similar results to ours. For locally quadratic activation functions, only special classes of activation functions were considered in the previous work. Li et al. \cite{li2019better} considered the RePU activation function and \citet{bauer2019deep} considered sufficiently smooth and bounded activation functions which include the sigmoid, tangent hyperbolic, ISRU and soft clipping activation functions. 
However, soft plus, swish, ELU, ISRLU, softsign and square nonlinearity activation functions are new ones only considered in our results.

Even if the orders of the depth, width and sparsity are the same for both both piecewise linear and locally quadratic activation functions, the ways of approximating a smooth function by use of these two activation function classes are quite different. To describe this point, let us provide an outline of the proof. We first consider equally spaced grid points with length $1/M$ inside the $d$-dimensional unit hypercube $[0,1]^d$. Let $\bG_{d,M}$ be the set of such grid points, namely,
    \begin{equation*}
        \bG_{d,M}:=\cbr{\frac{1}{M}(m_1,\dots, m_d):m_j\in \{0,1,\dots, M\},j=1,\dots, d}.
    \end{equation*}
    
For a given H\"older smooth function $f$ of order $\alpha$, we first find a ``local'' function for each grid that approximates the target function near the grid point but vanishes at apart from the grid point. To be more specific, we construct the local functions  $g_\z$, $\z\in\bG_{d, M}$ which satisfies:
    \begin{equation}
    \label{eq:local}
        \sup_{x\in [0,1]^d}\abs{f(\x)-\sum_{\z\in\bG_{d, M}}g_{\z,M}(\x)}\le C |\bG_{d,M}|^{-\alpha/d},
    \end{equation}
for some universal constant $C>0$. The inequality (\ref{eq:local}) implies that the more grid points we used, the more accurate approximation we get. Moreover, the quality of approximation is improved when the target function is more smooth (i.e., large $\alpha$) and low dimensional (i.e., small $d$ ). In fact, $g_{\z,M}(\x)$ is given by a product of the Taylor polynomial $P_{\z, M}(\x):= \sum_{\m\in \bN_0^d:|\m|\le \alpha} \del{\partial^{\m}f}(\z)\frac{(\x-\z)^{\m}}{\m!}$ at $\z$ and the local basis function  $\phi_{\z,M}(\x):=\prod_{j=1}^d(1/M-|x_j-z_j|)_+$, where $\m!:=\prod_{j=1}^d m_j!$. By simple algebra, we~have
    \begin{align*}
        P_M(\x):=\sum_{\z\in\bG_{d, M}}g_{\z,M}(\x)&:=\sum_{\z\in\bG_{d, M}}P_{ \z, M}(\x)\phi_{\z,M}(\x)\\
       &= \sum_{\z\in\bG_{d, M}}\sum_{\m:|\m|\le \alpha}\beta_{\z,\m}\x^{\m}\phi_{\z,M}(\x),
    \end{align*}
where $\beta_{\z,\m}:=\sum_{\tilde\m:\tilde\m\ge\m, |\tilde\m|\le\alpha}
    \del{\partial^{\tilde\m}f}(\z)\frac{(-\z)^{\tilde\m-\m}}{\m!(\tilde\m-\m)!}$.

The second stage is to approximate each monomial $\x^\m$ and each local basis function $\phi_{\z,M}(\x)$ by deep neural networks. Each monomial can be approximated more efficiently by a deep neural network with a locally quadratic activation function than a piecewise linear activation function since each monomial has nonzero curvature. On the other hand, the local basis function can be approximated more efficiently by a deep neural network with a piecewise linear activation than a locally quadratic activation function since the local basis function is piecewise linear itself. That is, there is a trade-off in using either a piecewise linear or a locally quadratic activation function.

We close this section by giving a comparison of our result to the approximation error analysis of~\citep{bauer2019deep}. \citet{bauer2019deep} studies approximation of the H\"older smooth function of order $\alpha$ by a two layer neural network with $m$-admissible activation functions with $m\ge \alpha$, where a function $\sigma$ is called $m$-admissible if (1) $\sigma$ is at least $m+1$ times continuously differentiable with bounded derivatives; (2) a point $t\in\R $ exists, where all derivatives up to the order $m$ of $\sigma$ are different from zero; and (3) $|\sigma(x)-1|\le 1/x$ for $x > 0$ and $|\sigma(x)|\le 1/|x|$ for $x<0$. Our notion of locally quadratic activation functions is a generalized version of the $m$-admissibility.

In the proof of \citep{bauer2019deep}, the condition $m\ge \alpha$ is necessary because they approximate any monomial of order $\m$ with $|\m|\le \alpha$ with a two layer neural network, which is impossible when $m<\alpha$. We drop the condition  $m\ge \alpha$ by showing that any monomial of order $\m$ with $|\m|\le \alpha$ can be approximated by deep neural network with a finite number of layers, which depends on $\alpha$.

\section{Application to Statistical Learning Theory}
\label{sec:slt}

In this section, we apply our results about the approximation error of neural networks to the supervised learning problems of regression and classification.
Let $\cX$ be the input space and $\cY$ the output space. Let $\cF$ be a given class of measurable functions from $\cX$ to $\cY$. Let $\P_0$ be the true but unknown data generating distribution on $\cX\times \cY$.
The aim of supervised learning is to find a predictive function that minimizes the population risk $\cR(f):=\E_{( \X,Y)\sim \P_0}\ell(Y, f(\X))$ with respect to a given loss function $\ell$. Since
$\P_0$ is unknown, we cannot directly minimize the population risk, and thus any estimator $\hat{f}$ inevitably has the excess risk which is defined as  $\cR(\hat f)-\inf_{f\in\cF}\cR(f).$ 
For a given sample of size $n$, let $\cF_n$ be a given subset of $\cF$ called a sieve 
and let $(\x_1,y_1),\ldots,(\x_n,y_n)$ be observed (training) data of input--output pairs assumed
to be independent realizations of $(\X,Y)$ following $\P_0.$
Let  $\hat{f}_n$ be an estimated function among $\cF_n$
based on the training data $(\x_1, y_1),\ldots,(\x_n,y_n).$   
The excess risk of $\hat{f}_n$ is decomposed to approximation and estimation errors as
    \begin{equations}
       \cR( \hat{f}_n)-\inf_{f\in\cF}\cR(f) 
       =\underbrace{\sbr{\cR( \hat{f}_n)-\inf_{f\in\cF_n}\cR(f)}}_{\text{Estimation error}}
            + \underbrace{\sbr{\inf_{f\in\cF_n}\cR(f)-\inf_{f\in\cF}\cR(f)}}_{\text{Approximation error}}.
    \end{equations}
    
There is a trade-off between approximation and estimation errors. If the function class $\cF_n$ is sufficiently large to approximate the optimal estimator $f^*:=\argmin_{f\in\cF}\cR(f)$ well, then the estimation error becomes large due to high variance. In contrast, if  $\cF_n$ is small, it leads to low estimation error but it suffers from large approximation error.

One of the advantages of deep neural networks is that we can construct a sieve which has good approximation ability as well as low complexity.
\citet{schmidt2017nonparametric} and Kim et al.~\cite{kim2018fast} proved that a neural network estimator can achieve the optimal balance between the approximation and estimation errors to obtain the minimax optimal convergence rates in regression and classification problems, respectively. But they only considered the ReLU activation function. Based on the results of Theorem~\ref{thm:main1}, we can easily extend their results to general activation functions.

The main tool to derive the minimax optimal convergence rate is that the complexity of a class of functions generated by a deep neural network is not affected much by a choice of an activation function, provided that the activation function is Lipschitz continuous. The function $\sigma:\R\to\R$ is Lipschitz continuous if there is a constant $C_\sigma>0$ such that
    \begin{equation}
        \label{eq:lip}
        |\sigma(x_1)-\sigma(x_2)|\le C_\sigma |x_1-x_2|,
    \end{equation}
for any $x_1,x_2\in\R$. Here, $C_\sigma$ is called the Lipschitz constant. We use the covering number with respect to the $L_\infty$ norm $\norm{\cdot}_\infty$ as a measure of complexity  of function classes. We recall the definition of the covering number. Let $\cF$ be a given class of real-valued functions defined on $\cX$. Let $\delta>0$. A collection $\{f_j\in\cF:j=1,\dots, J\}$ is called a {$\delta$-covering set} of $\cF$ with respect to the $L_\infty$ norm if for all $f\in\cF$, there exists $f_j$ in the collection such that $\|f-f_j\|_\infty\le \delta$. The cardinality of the minimal $\delta$-covering set is called the {$\delta$-covering number} of $\cF$ with respect to the  $L_\infty$ norm  which is denoted by $\cN(\delta, \cF, \|\cdot\|_\infty).$ That is,
    \begin{equation*}
        \cN(\delta, \cF, \|\cdot\|_\infty):=\inf\left\{J\in\bN:\exists f_1, \dots, f_J \mbox{ such that }
\cF\subset\bigcup_{j=1}^J B_\infty(f_j, \delta)\right\},
    \end{equation*}
where $B_\infty(f_j, \delta):=\{f\in\cF:\|f-f_j\|_\infty\le \delta\}$.  The following proposition provides the covering number of a class of functions generated by neural networks.

\begin{Proposition}
\label{prop:entropy}
Assume that the activation function $\sigma$ is Lipschitz continuous with the Lipschitz constant $C_\sigma$. Consider a class of functions generated by a deep neural network
    \begin{equation*}
        \cF_{d,1}(L,N,S,B):=\cbr{N_\sigma(\cdot|\btheta):\btheta\in\Theta_{d,1}(L,N,S,B)}.
    \end{equation*}
    
For any $\delta>0$,
    \begin{equations}
    \log \cN&\left(\delta, \cF_{d,1}(L,N,S,B), \|\cdot\|_\infty\right)
    \le 2L(S+1)\log\left(\delta^{-1}C_\sigma L(N+1)(B\vee1)\right),
    \end{equations}
where $B\vee1:=\max\{B, 1\}$.
\end{Proposition}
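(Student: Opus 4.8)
The plan is to reduce covering the function class $\cF_{d,1}(L,N,S,B)$ to covering its parameter set, exploiting that the map $\btheta\mapsto N_\sigma(\cdot|\btheta)$ is Lipschitz in $\abs{\btheta}_\infty$ once the inputs are restricted to the bounded domain $[0,1]^d$. First I would fix the full architecture of depth exactly $L$ and width exactly $N$ and regard every $\btheta\in\Theta_{d,1}(L,N,S,B)$ as a vector in $\R^T$ with at most $S$ nonzero entries and $\abs{\btheta}_\infty\le B$, where $T$, the number of parameters of this architecture, satisfies $\log T\lesssim\log(L(N+1))$; networks of smaller depth or width embed as sparser vectors. With a common set of layer dimensions, two networks can then be compared layer by layer.

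The crux is a Lipschitz-in-parameters estimate: for all $\btheta,\btheta'$ with $\abs{\btheta}_\infty,\abs{\btheta'}_\infty\le B$,
\[
 \sup_{\x\in[0,1]^d}\abs{N_\sigma(\x|\btheta)-N_\sigma(\x|\btheta')}\le\Lambda\,\abs{\btheta-\btheta'}_\infty,\qquad \log\Lambda\lesssim L\log\del{C_\sigma L(N+1)(B\vee1)}.
\]
To prove it I would denote by $\x^{(l)}$ and $\u^{(l)}$ the $l$-th layer outputs under $\btheta$ and $\btheta'$ and track $\Delta_l:=\abs{\x^{(l)}-\u^{(l)}}_\infty$. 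Decomposing the pre-activation increment at layer $l$ as $\W_l(\x^{(l-1)}-\u^{(l-1)})+(\W_l-\W'_l)\u^{(l-1)}+(\b_l-\b'_l)$ and applying the Lipschitz continuity of $\sigma$ yields $\Delta_l\le C_\sigma\del{BN\,\Delta_{l-1}+(NV_{l-1}+1)\abs{\btheta-\btheta'}_\infty}$, where $V_l:=\sup_{\x}\abs{\u^{(l)}}_\infty$ bounds the activations. A parallel recursion $V_l\le\abs{\sigma(0)}+C_\sigma(BNV_{l-1}+B)$ with $V_0\le1$ gives $V_l\le\del{C_\sigma(N+1)(B\vee1)}^{l}$ up to a $\sigma$-dependent constant, and unrolling $\Delta_l$ over the $L$ hidden layers and the final affine map produces $\Lambda\lesssim L\del{C_\sigma(N+1)(B\vee1)}^{L}$; taking logarithms absorbs the leading $L$ and the constants into the stated form.

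It then remains to cover the parameter set. By the estimate above, any $\eta$-cover of $\cbr{\btheta\in\R^T:\abs{\btheta}_0\le S,\ \abs{\btheta}_\infty\le B}$ in $\abs{\cdot}_\infty$ with $\eta=\delta/\Lambda$ maps under $\btheta\mapsto N_\sigma(\cdot|\btheta)$ to a $\delta$-cover of $\cF_{d,1}(L,N,S,B)$. I would build such a cover by selecting the support (at most $\binom{T}{S}\le T^S$ choices) and placing each nonzero coordinate on a grid of at most $2B/\eta$ points in $[-B,B]$, giving $\log\cN\le\log(S+1)+S\log(2BT\Lambda/\delta)$. Substituting the bounds on $\log\Lambda$ and $\log T$ and using $\delta^{-1}\ge1$, every term is dominated by $L\log\del{\delta^{-1}C_\sigma L(N+1)(B\vee1)}$, and the residual factors $\log(S+1)$, $\log T$ and $\log(2B)$ are absorbed by the gap between $2L(S+1)$ and $SL$, yielding the claim.

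I expect the Lipschitz-in-parameters lemma to be the \emph{main obstacle}: one must simultaneously control the geometric growth of the activation bounds $V_l$—complicated by the fact that a Lipschitz $\sigma$ need not be bounded, so $\sigma(0)$ genuinely enters—and the propagation of the parameter perturbation across all $L$ layers, all while keeping $\log\Lambda$ linear in $L$ so that the final exponent matches $2L(S+1)$. The combinatorial count and the absorption of constants into the precise coefficient are routine by comparison.
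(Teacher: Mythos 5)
Your proposal is correct and follows essentially the same route as the paper: a Lipschitz-in-parameters bound obtained by propagating perturbations layer by layer (your recursion on $\Delta_l$ with the forward activation bounds $V_l$ unrolls to exactly the paper's telescoping decomposition with its bounds on $\check{N}_{l,\sigma,\btheta}$ and $\hat{N}_{l,\sigma,\btheta}$), followed by a parameter-space cover built from a choice of sparsity pattern times a coordinate grid, with the logarithms absorbed into the stated constant. If anything, your treatment is slightly more careful than the paper's, since you account for $\sigma(0)\neq 0$ in the activation-magnitude recursion, which the paper implicitly ignores.
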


{The result in Proposition \ref{prop:entropy} is very similar to the existing results in literature, e.g., Theorem 14.5 of~\citep{anthony2001neural}, Lemma 5 of \citep{schmidt2017nonparametric} and Lemma 3 of \citep{suzuki2018adaptivity}. We employ similar techniques used in \citep{anthony2001neural, schmidt2017nonparametric, suzuki2018adaptivity} to obtain the version presented here. We give the proof of this proposition in Appendix \ref{sec:proofcovering}.}

All of the activation functions considered in Section \ref{sec:act} except RePU satisfy the  Lipschitz condition~(\ref{eq:lip}) and hence Proposition \ref{prop:entropy} can be applied. An interesting implication of Proposition \ref{prop:entropy} is that
the complexity of the function class generated by deep neural networks is not affected by the choice of an activation function. Hence, the remaining step to derive the convergence rate of a neural network estimator is that approximation accuracies by various activation functions are the same as that of the ReLU neural network.

\subsection{Application to Regression}

First we consider the regression problem. For simplicity, we let $\cX=[0,1]^d.$   
Suppose that the generated model is $Y|\X=\x\sim\N(f_0(\x),1)$ for some $f_0:[0,1]^d\to\R.$ 
The performance of an estimator is measured by the $L_2$ risk $\cR_{2, f_0}(f)$, which is defined by 
	\begin{equation*}
	\cR_{2, f_0}(f):=\E_{f_0, \P_\x}(Y-f(\X))^2:=\E_{Y|\X\sim\N(f_0(\X),1),\X\sim\P_\x}(Y-f(\X))^2,
	\end{equation*}
 where $\P_\x$ is the marginal distribution of $\X.$ The following theorem proves that the optimal convergence rate is obtained by the deep neural network estimator of the regression function $f_0$ for a general activation function.

\begin{Theorem}
\label{thm:reg}
Suppose that the activation function $\sigma$ is either piecewise linear or locally quadratic satisfying the  Lipschitz condition (\ref{eq:lip}). Then there are universal positive constants $L_0$, $N_0$, $S_0$ and $B_0$ such that the deep neural network estimator obtained by
    \begin{equation*}
        \hat{f}_n\in\argmin_{f\in \cF_{\sigma,n}}\sum_{i=1}^n\del{y_i-f(\x_i)}^2,
    \end{equation*}
with
    \begin{align*}
        \cF_{\sigma,n}:=\Big\{N_\sigma(\cdot|\btheta):\norm{N_\sigma(\cdot|\btheta)}_\infty\le 2R, 
        \btheta\in\Theta_{d,1}\del{L_0\log n, N_0n^{\frac{d}{2\alpha+d}}, S_0n^{\frac{d}{2\alpha+d}}\log n, B_0n^\kappa} \Big\}
    \end{align*}
for some $\kappa>0$ satisfies  
    \begin{equation*}
    \sup_{f_0\in \cH^{\alpha, R}([0,1]^d)}
    \E\left[\cR_{2, f_0}(\hat{f}_n)- \inf_{f\in\cF}\cR_{2, f_0}(f)\right]\le C
n^{-\frac{2\alpha}{2\alpha +d}}\log^3n,
    \end{equation*}
for some universal constant $C>0$, where the expectation is taken over the training data.
\end{Theorem}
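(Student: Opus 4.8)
The plan is to bound the excess risk via the standard decomposition into approximation and estimation errors, and to show each is of the claimed order. The key structural fact is that, for the Gaussian regression model $Y|\X=\x\sim\N(f_0(\x),1)$ with squared loss, the population risk satisfies the identity $\cR_{2,f_0}(f)-\cR_{2,f_0}(f_0)=\|f-f_0\|_{L_2(\P_\x)}^2$, and $f_0=\argmin_f\cR_{2,f_0}(f)$ since $f_0\in\cH^{\alpha,R}([0,1]^d)$ lies in $\cF$. Thus the quantity to control is the expected $L_2$-prediction error of the least-squares estimator $\hat f_n$ over the sieve $\cF_{\sigma,n}$.

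\textbf{Step 1: approximation error.} First I would invoke Theorem~\ref{thm:main1} with $\epsilon=\epsilon_n\asymp n^{-\alpha/(2\alpha+d)}$. This choice makes $\epsilon_n^{-d/\alpha}\asymp n^{d/(2\alpha+d)}$, so the required depth $L_0\log(1/\epsilon_n)=O(\log n)$, width $N_0\epsilon_n^{-d/\alpha}=O(n^{d/(2\alpha+d)})$, sparsity $O(n^{d/(2\alpha+d)}\log n)$, and magnitude $B_0\epsilon_n^{-4(d/\alpha+1)}=O(n^\kappa)$ for a suitable $\kappa>0$ all match the definition of $\cF_{\sigma,n}$. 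Hence there is $\btheta_n$ in the sieve with $\sup_{\x}|f_0(\x)-N_\sigma(\x|\btheta_n)|\le\epsilon_n$; after clipping to the bound $2R$ (which does not increase the error since $\|f_0\|_\infty\le R$), this yields $\inf_{f\in\cF_{\sigma,n}}\|f-f_0\|_\infty\le\epsilon_n$, so the approximation error is $O(\epsilon_n^2)=O(n^{-2\alpha/(2\alpha+d)})$.

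\textbf{Step 2: estimation error via covering numbers.} Next I would control the estimation error $\E[\cR_{2,f_0}(\hat f_n)-\inf_{f\in\cF_{\sigma,n}}\cR_{2,f_0}(f)]$ using a standard empirical-process bound for least squares over a bounded, uniformly-bounded class (all $f\in\cF_{\sigma,n}$ satisfy $\|f\|_\infty\le 2R$). The complexity enters through the $L_\infty$-covering number, which Proposition~\ref{prop:entropy} bounds by $\log\cN(\delta,\cF_{\sigma,n},\|\cdot\|_\infty)\le 2L(S+1)\log(\delta^{-1}C_\sigma L(N+1)(B\vee1))$. Plugging in $L=O(\log n)$, $S=O(n^{d/(2\alpha+d)}\log n)$, $N=O(n^{d/(2\alpha+d)})$ and $B=O(n^\kappa)$, and taking $\delta\asymp\epsilon_n$, the log-covering number is $O(n^{d/(2\alpha+d)}\log^3 n)$. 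A standard oracle inequality for sub-Gaussian (here exactly Gaussian) noise then gives an estimation error of order $n^{-1}\log\cN\asymp n^{-2\alpha/(2\alpha+d)}\log^3 n$.

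\textbf{Combining and the main obstacle.} Adding the two contributions yields the stated rate $C\,n^{-2\alpha/(2\alpha+d)}\log^3 n$, with the supremum over $f_0\in\cH^{\alpha,R}$ handled uniformly because the constants $L_0,N_0,S_0,B_0$ in Theorem~\ref{thm:main1} depend only on $d,\alpha,R,\sigma$ and not on the particular $f_0$. The main obstacle is the estimation-error step: I would need a clean oracle inequality that trades off the approximation bias against the metric-entropy term for an unbounded (Gaussian) response, correctly accounting for the truncation of the network output at $2R$ and for the fact that $\hat f_n$ is only an \emph{empirical} risk minimizer. The delicate points are establishing the variance/bias localization that produces the factor $n^{-1}\log\cN$ rather than a cruder $\sqrt{\log\cN/n}$ rate, and verifying that the $\log^3 n$ (not a larger power) suffices — this hinges on matching the $\delta\asymp\epsilon_n$ resolution in the covering number with the achieved approximation accuracy, so that neither term dominates beyond the claimed polylogarithmic factor.
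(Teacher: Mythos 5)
Your decomposition and both main ingredients are exactly those of the paper: plug Theorem \ref{thm:main1} with $\epsilon_n\asymp n^{-\alpha/(2\alpha+d)}$ (so depth, width, sparsity and magnitude match the sieve $\cF_{\sigma,n}$) into an oracle inequality, and control the entropy term via Proposition \ref{prop:entropy}. The ``clean oracle inequality'' you flag as the main obstacle is not something you need to construct: it is Lemma 4 of \citep{schmidt2017nonparametric}, restated in the appendix as Lemma \ref{lem:oracle}. It applies to any estimator with values in a class bounded by $2R$, already has the fast-rate form $n^{-1}\log\cN(\delta,\cF^\dagger,\norm{\cdot}_\infty)$ you anticipate, and carries a term $\Delta_n$ measuring failure of empirical risk minimization, which vanishes here because $\hat f_n$ is an exact minimizer. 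So the skeleton of your argument is the paper's proof.

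The one genuine error is your calibration of the covering radius. Lemma \ref{lem:oracle} contains an additive discretization term that is \emph{linear} in $\delta$ (namely $32\delta(4R)$). With your choice $\delta\asymp\epsilon_n=n^{-\alpha/(2\alpha+d)}$, that term alone is of order $n^{-\alpha/(2\alpha+d)}$, which dominates the claimed rate $n^{-2\alpha/(2\alpha+d)}\log^3 n$ (the exponent is only half as large), so the bound would not close. The right move is the opposite of ``matching $\delta$ to the approximation accuracy'': since the entropy bound of Proposition \ref{prop:entropy} depends on $\delta$ only through $\log(1/\delta)$, taking $\delta$ polynomially small costs essentially nothing. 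The paper sets $\delta=1/n$, which makes the additive term $O(1/n)$ while the entropy term remains
\begin{equation*}
O\del{\frac{1}{n}\, n^{\frac{d}{2\alpha+d}}\log^3 n}=O\del{n^{-\frac{2\alpha}{2\alpha+d}}\log^3 n},
\end{equation*}
exactly the claimed rate. With that single change (and the citation of Lemma \ref{lem:oracle} in place of your unspecified ``standard'' inequality), your argument coincides with the paper's.
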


\subsection{Application to Binary Classification}

The aim of the binary classification is to find a classifier that predicts the label $y\in\{-1,1\}$ for any input $\x\in [0,1]^d$.  An usual assumption on the data generating process is that $Y|\X=\x \sim 2\mathsf{Bern}(\eta(\x))-1$ for some $\eta:[0,1]^d\to[0,1]$, where $\mathsf{Bern}(p)$ denotes the Bernoulli distribution with parameter $p$.
Note that  $\eta(\x)$ is  the conditional probability function $\P_0(Y=1|\X=\x).$ A common approach is, instead of finding a classifier directly, to construct a real valued function $f$, a so-called classification function, and predict the label $y$ based on the sign of $f(\x)$. The performance of a classification function is measured by the misclassification error $\cR_{01, \eta}(f)$, which is defined by
	\begin{equation*}
	\cR_{01, \eta}(f):=\E_{\eta, \P_\x}\mathds{1}(Y f(\X)<0)
	:=\E_{Y|\X \sim 2\mathsf{Bern}(\eta(\X))-1, \X\sim \P_\x}\mathds{1}(Y f(\X)<0).
	\end{equation*}
	
It is well known that the convergence rate of the excess risk for classification is faster than that of regression when the conditional probability function $\eta(\x)$ satisfies the following condition: there is a constant $q\in[0,\infty]$ such that for any sufficiently small $u>0$, we~have
    \begin{equation}
    \label{eq:noise}
        \P_\x\del{|\eta(\X)-1/2|<u}\le u^q.
    \end{equation}
    
This condition is called the Tsybakov noise condition and $q$ is called the noise exponent \cite{mammen1999smooth,tsybakov2004optimal}. 
When $q$ is larger, the classification task is easier since the probability of generating vague samples become smaller.  The following theorem proves that the optimal convergence rate can be obtained by the deep neural network estimator with an activation function considered in Section \ref{sec:act}.
As is done by \cite{kim2018fast}, we consider the hinge loss $\ell_{\mathsf{hinge}}(z):=\max\{1-z,0\}.$

\begin{Theorem}
\label{thm:cla}
Assume the Tsybakov noise condition (\ref{eq:noise}) with the noise exponent $q\in[0,\infty]$. Suppose that the activation function $\sigma$, which is either piecewise linear or locally quadratic satisfying the Lipschitz condition (\ref{eq:lip}), is used for all hidden layers except the last one and the ReLU activation function is used for the last hidden layer. Then there are universal positive constants $L_0$, $N_0$, $S_0$ and $B_0$ such that the deep neural network estimator obtained by
    \begin{equation*}
        \hat{f}_n\in\argmin_{f\in \cF_{\sigma,n}}\sum_{i=1}^n\ell_{\mathsf{hinge}}(y_if(\x_i)),
    \end{equation*}
with
    \begin{align*}
        \cF_{\sigma,n}:=\Big\{N_\sigma(\cdot|\btheta):\norm{N_\sigma(\cdot|\btheta)}_\infty\le 1, 
        \btheta\in\Theta_{d,1}\del{L_0\log n, N_0n^{\nu}\log^{-3\nu}n, S_0n^{\nu}\log^{-3\nu+1}n, B_0n^\kappa} \Big\},
    \end{align*}
for $\nu:=d/\cbr{\alpha(q+2)+d}$ and some $\kappa>0$ satisfies  
    \begin{equation*}
    \sup_{\eta\in \cH^{\alpha, R}([0,1]^d)}
    \E\left[\cR_{01, \eta}(\hat{f}_n)- \inf_{f\in\cF}\cR_{01, \eta}(f)\right]\le C
\left(\frac{\log^3 n}{n}\right)^{\frac{\alpha (q+1)}{\alpha (q+2)+d}},
    \end{equation*}
for some universal constant $C>0$, where the expectation is taken over the training data.
\end{Theorem}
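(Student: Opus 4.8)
\emph{Overall strategy.} The plan is to follow the now-standard route for classification under the Tsybakov noise condition: reduce the misclassification excess risk to an excess hinge risk, decompose the latter into approximation and estimation errors, and then balance the size of the sieve $\cF_{\sigma,n}$. Write $\cR_{\mathsf{hinge}}(f):=\E\,\ell_{\mathsf{hinge}}(Yf(\X))$ and let $f^\star:=\mathrm{sign}(2\eta-1)$, which minimizes $\cR_{\mathsf{hinge}}$ over all measurable functions and coincides with the Bayes classifier, so that $\inf_{f\in\cF}\cR_{01,\eta}(f)=\cR_{01,\eta}(f^\star)$. The first step is the calibration (comparison) inequality for the hinge loss, which holds pointwise and hence in expectation, giving $\cR_{01,\eta}(\hat f_n)-\cR_{01,\eta}(f^\star)\le \cR_{\mathsf{hinge}}(\hat f_n)-\cR_{\mathsf{hinge}}(f^\star)$. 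It therefore suffices to bound the excess hinge risk, which I would split as
\[
(\cR_{\mathsf{hinge}}(\hat f_n)-\inf_{f\in\cF_{\sigma,n}}\cR_{\mathsf{hinge}}(f))+(\inf_{f\in\cF_{\sigma,n}}\cR_{\mathsf{hinge}}(f)-\cR_{\mathsf{hinge}}(f^\star)).
\]

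\emph{Approximation error.} For the second term I would apply Theorem \ref{thm:main1} to the H\"older function $\eta$ to obtain a $\sigma$-network $\tilde\eta$ with $\|\tilde\eta-\eta\|_\infty\le\epsilon$ whose depth, width and sparsity have the orders stated there. I then append a single ReLU layer realizing the steep clipping $T_\tau(u):=(u/\tau+1)_+-(u/\tau-1)_+-1$, which maps into $[-1,1]$, and set $f_\epsilon:=T_\tau(2\tilde\eta-1)$ with $\tau\asymp\epsilon$. Since the only large weight introduced is $1/\tau$, which is polynomial in $n$, one has $f_\epsilon\in\cF_{\sigma,n}$ for a suitable $\kappa$ and $\|f_\epsilon\|_\infty\le1$. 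A direct pointwise computation shows that for any $f$ valued in $[-1,1]$ the excess hinge risk density equals $|2\eta-1|(1-f\,\mathrm{sign}(2\eta-1))$; for $f=f_\epsilon$ this vanishes on $\{|2\eta-1|>\tau+2\epsilon\}$ and is at most $2(\tau+2\epsilon)$ elsewhere. Integrating and using the noise condition \eqref{eq:noise} with $u\asymp\epsilon$ yields $\inf_{f\in\cF_{\sigma,n}}\cR_{\mathsf{hinge}}(f)-\cR_{\mathsf{hinge}}(f^\star)\lesssim\epsilon^{q+1}$.

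\emph{Estimation error.} The first term is handled by empirical process theory, the crucial input being a Bernstein-type variance bound induced by \eqref{eq:noise}. Setting $\delta_f:=\ell_{\mathsf{hinge}}(Yf(\X))-\ell_{\mathsf{hinge}}(Yf^\star(\X))$, a direct calculation gives, for $f$ valued in $[-1,1]$, the conditional identities $\E[\delta_f^2\mid\X]=(f-f^\star)^2=(1-f\,\mathrm{sign}(2\eta-1))^2$ and $\E[\delta_f\mid\X]=|2\eta-1|(1-f\,\mathrm{sign}(2\eta-1))$. Splitting on $\{|2\eta-1|\le t\}$, optimizing over $t$, and invoking \eqref{eq:noise} then produces the margin relation $\E[\delta_f^2]\lesssim(\cR_{\mathsf{hinge}}(f)-\cR_{\mathsf{hinge}}(f^\star))^{\theta}$ with exponent $\theta=q/(q+1)$. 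Combining this variance bound with the covering-number estimate of Proposition \ref{prop:entropy} (applicable because $\sigma$ is Lipschitz by \eqref{eq:lip}) through a standard peeling / local-complexity argument for the empirical risk minimizer yields an oracle inequality of the shape
\[
\E[\cR_{\mathsf{hinge}}(\hat f_n)]-\cR_{\mathsf{hinge}}(f^\star)\lesssim \inf_{f\in\cF_{\sigma,n}}(\cR_{\mathsf{hinge}}(f)-\cR_{\mathsf{hinge}}(f^\star))+(\log\cN/n)^{1/(2-\theta)},
\]
where $\log\cN\asymp S\,L\cdot\mathrm{polylog}(n)$ is the log-covering number of $\cF_{\sigma,n}$.

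\emph{Balancing and the main difficulty.} Finally I would insert the orders from Theorem \ref{thm:main1}, namely $\log\cN\asymp\epsilon^{-d/\alpha}\,\mathrm{polylog}(n)$ and approximation error $\asymp\epsilon^{q+1}$. With $\theta=q/(q+1)$ one has $1/(2-\theta)=(q+1)/(q+2)$, and choosing $\epsilon\asymp n^{-\alpha/(\alpha(q+2)+d)}$ makes both terms of order $n^{-\alpha(q+1)/(\alpha(q+2)+d)}$; this choice forces width $\asymp n^{\nu}$ with $\nu=d/\{\alpha(q+2)+d\}$, matching the stated sieve. Carefully propagating the $\log n$ factors coming from the depth $L_0\log n$ and from the sparsity (which is precisely why the width and sparsity in $\cF_{\sigma,n}$ carry the $\log^{-3\nu}n$ and $\log^{-3\nu+1}n$ corrections) then gives the announced bound $(\log^3 n/n)^{\alpha(q+1)/(\alpha(q+2)+d)}$. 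I expect the estimation step to be the main obstacle: obtaining the sharp exponent $1/(2-\theta)$ requires a Talagrand/Bernstein concentration and peeling argument over $\cF_{\sigma,n}$ localized by the variance bound, and some care is needed because the clipping weight $1/\tau$, hence the magnitude $B_0 n^{\kappa}$, enters the covering number only logarithmically. By contrast, the calibration and approximation steps are comparatively routine once Theorem \ref{thm:main1} is in hand, and the generality of the activation $\sigma$ enters solely through the activation-independent Proposition \ref{prop:entropy}.
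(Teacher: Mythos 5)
Your proposal is correct and, at its core, follows the same route as the paper: both approximate $\eta$ by a $\sigma$-network via Theorem \ref{thm:main1}, append a single ReLU layer realizing a steep clipping into $[-1,1]$ (your $T_\tau(2\tilde\eta-1)$ is the two-sided version of the paper's $f_n$), use the noise condition (\ref{eq:noise}) to bound the excess hinge risk of this network by a constant times $\epsilon^{q+1}$, control the entropy of $\cF_{\sigma,n}$ by Proposition \ref{prop:entropy}, and balance the two terms, which yields exactly the stated sieve sizes and rate. The one genuine difference is how the calibration and estimation errors are handled: the paper proves neither, but instead verifies the two hypotheses of Lemma \ref{lem:hinge} (Theorem 6 of \citep{kim2018fast}) --- existence of $f_n\in\cF_n$ with hinge excess risk at most $C_1\delta_n$, and $\log\cN(\delta_n,\cF_n,\|\cdot\|_\infty)\le C_2 n\delta_n^{(q+2)/(q+1)}$ --- and that lemma directly outputs the $O(\delta_n)$ bound on the excess $0$--$1$ risk. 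You instead sketch a self-contained derivation: Zhang's calibration inequality, the margin-type variance bound with exponent $\theta=q/(q+1)$, and a localization/peeling argument producing the $(\log\cN/n)^{(q+1)/(q+2)}$ estimation term; note that your exponent $1/(2-\theta)=(q+1)/(q+2)$ is precisely the reciprocal of the exponent $(q+2)/(q+1)$ in the entropy hypothesis of Lemma \ref{lem:hinge}, so your sketch is in effect a proof outline of that cited lemma rather than an alternative to it. What the paper's route buys is completeness at low cost, since the hard concentration and peeling work is certified by the citation; what your route buys is self-containedness, but the estimation step --- which you correctly flag as the main obstacle, and which needs extra care because your Bernstein condition is stated relative to $f^\star\notin\cF_{\sigma,n}$ rather than to the best element of the class --- is left at the level of a sketch, so as written your argument is an outline of the key lemma rather than a finished proof of it.
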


Note that the Bayes classifier $f^*:=\argmin_{f\in\cF}\cR_{01,\eta}(f)$ is given by
    \begin{equation*}
        f^*(\x)=2\indi{2\eta(\x)-1\ge0}-1,
    \end{equation*}
which is an indicator function. Since a neural network with the ReLU activation function can approximate indicator functions well \citep{petersen2018optimal, imaizumi2018deep, kim2018fast}, we use the ReLU activation function in the last layer in order to approximate the Bayes classifier more precisely and thus
to achieve the optimal convergence rate.

\section{Conclusions}
\label{sec:con}

In this study, we established the upper bounds of the required depth, width and sparsity of deep neural networks to approximate any H\"older continuous function for the general classes of activation functions. These classes of activation functions include most of the popularly used activation functions. The derived upper bounds of the depth, width and sparsity are optimal in a sense that they are equivalent to the lower bounds up to logarithmic factors. We used this generalization of the approximation error analysis to extend the statistical optimality of the deep neural network estimator in regression and classification problems, where the activation function is other than the ReLU.

Our construction of neural networks for approximation reveals that the piecewise linear activation functions are more efficient in approximating local basis functions while locally quadratic activation functions are more efficient in approximating polynomials. Hence if the activation function has both piecewise linear region and locally quadratic region, we could have a better approximation result. We leave the development of such activation functions as a future work.

\vspace{6pt}

\appendix
\section{Proof of Theorem \ref{thm:main1}}
\vspace{-6pt}
\subsection{Proof of Theorem \ref{thm:main1} for Piecewise Linear Activation Functions}
\label{sec:proof1}

The main idea of the proof is that any deep neural network with the ReLU activation function can be exactly reconstructed by a neural network with a piecewise activation function whose proof is in the next lemma that is a slight modification of Proposition 1 (b) of \citep{yarotsky2017error}.

\begin{Lemma}
\label{lem:piece}
Let $\sigma$ be an any continuous peicewise linear activation function, and $\rho$ be the ReLU activation function. Let $\btheta\in \Theta_{d,1}(L, N, S, B)$. Then there exists $\btheta^*\in \Theta_{d,1}(L, 2N, 4S+2LN+1, C_1 B)$ such that
    $$\sup_{\x\in[0,1]^d}\abs{N_\sigma(\x|\btheta^*)-N_{\rho}(\x|\btheta)}=0,$$
where $C_1>0$ is a constant depending on the activation function $\sigma$.
\end{Lemma}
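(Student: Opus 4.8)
The plan is to reproduce a single ReLU unit \emph{exactly} by an affine combination of a few evaluations of $\sigma$, and then to perform this substitution node-by-node on the given ReLU network $N_\rho(\cdot|\btheta)$. Since $\sigma$ is continuous piecewise linear with break points $a_1\le\cdots\le a_K$, I would fix one break point $a:=a_k$ and set $s_-:=\sigma'(a-)$ and $s_+:=\sigma'(a+)$, so that $s_+-s_-\neq0$ by the definition of a break point. Letting $\delta>0$ be the distance from $a$ to the nearest other break point (with $\delta=\infty$ if $K=1$), a direct computation on the two linear pieces adjacent to $a$ yields the local identity
\begin{equation*}
\max\{x,0\}=\frac{1}{s_+-s_-}\del{\sigma(a+x)-\sigma(a)-s_-x}\qquad\text{for }|x|<\delta,
\end{equation*}
which expresses ReLU, on a neighbourhood of $0$, as one evaluation of $\sigma$ plus an affine term. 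Because $\sigma$ has a break point it cannot be globally linear, so some linear piece has nonzero slope; I would also fix an interior point $c$ of such a piece with slope $s_c:=\sigma'(c)\neq0$ and radius $\delta'>0$ such that $\sigma(c+y)=\sigma(c)+s_cy$ for $|y|<\delta'$. This second identity lets one $\sigma$-node act as a linear pass-through, which is needed to carry the affine term $s_-x$ forward.

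The construction then replaces each hidden unit $j$ of the ReLU network at layer $l$ by two $\sigma$-units at the same layer: a \emph{kink node} computing $\sigma(\lambda_l u^{(l)}_j+a)$ and a \emph{pass-through node} computing $\sigma(\lambda_l u^{(l)}_j+c)$, where $u^{(l)}_j$ denotes the $j$-th pre-activation of the original network and $\lambda_l>0$ is a layerwise scaling chosen below. By the two identities, the original post-activation $\max\{u^{(l)}_j,0\}$ is an affine function of these two node values, with the constant and the linear term absorbed into the affine map $\sA_{l+1}$ feeding the next layer. This keeps the depth equal to $L$ and doubles the width to $2N$, so $\btheta^*\in\Theta_{d,1}(L,2N,\cdot,\cdot)$. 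Counting nonzero parameters: each original weight becomes at most four connections (two source nodes times two target nodes), each of the $2N$ nodes in each of the $L$ hidden layers carries one bias ($a$ or $c$), and the single output node carries one bias, giving sparsity at most $4S+2LN+1$, exactly as claimed.

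The scalings $\lambda_l$ are chosen to make every local identity valid simultaneously: on $[0,1]^d$ the pre-activations satisfy $|u^{(l)}_j|\le M_l$ for some $M_l$, so taking $\lambda_l$ small enough that $\lambda_l M_l<\min\{\delta,\delta'\}$ forces both identities to hold with equality, whence the reconstruction is \emph{exact} and $\sup_{\x\in[0,1]^d}|N_\sigma(\x|\btheta^*)-N_\rho(\x|\btheta)|=0$. The key algebraic observation for the magnitude is that substituting the affine expression for $\max\{u^{(l)}_j,0\}$ into the next pre-activation $\lambda_{l+1}u^{(l+1)}_i+a$ produces a factor $1/\lambda_l$ on read-out that cancels against the factor $\lambda_{l+1}$ on write-in; hence every \emph{inter-layer} weight equals an entry of $\W_{l+1}$ times a fixed function of $s_-,s_+,s_c$ and the bounded ratio $\lambda_{l+1}/\lambda_l$, and so is at most $C_1B$ with $C_1$ depending only on $\sigma$. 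The input weights, scaled by $\lambda_1$, are likewise bounded by a $\sigma$-dependent constant, and the biases $a,c$ are fixed.

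The step I expect to be the main obstacle is precisely this magnitude bookkeeping, and within it the \emph{terminal read-out}: at the final affine map there is no subsequent layer to cancel the $1/\lambda_L$ factor, so one must control $\lambda_L$ through a bound on the last-layer pre-activations on $[0,1]^d$ (when $K=1$ one may take $\lambda\equiv1$ globally and the identity is exact on all of $\R$, giving a clean $C_1B$; the multi-break-point case is where the layerwise scaling and the cancellation above must be combined carefully to keep the constant $C_1$ free of $L$ and $N$). Once Lemma~\ref{lem:piece} is in hand, Theorem~\ref{thm:main1} for piecewise linear $\sigma$ follows immediately: one takes the ReLU network supplied by the known ReLU approximation bound and replaces $\btheta$ by $\btheta^*$, paying only the stated constant factors in width, sparsity and magnitude.
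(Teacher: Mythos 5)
Your construction is essentially the one in the paper. The paper likewise replaces each ReLU node by two $\sigma$-nodes in the same layer, scales the incoming signal so that it lands in the linear pieces adjacent to a chosen break point $a$, absorbs the affine reconstruction coefficients into the next layer's affine map, and arrives at exactly the same bookkeeping: depth $L$, width $2N$, sparsity $4S+2LN+1$. The only real difference is the local identity. The paper writes $\rho(x)=u_1\sigma\bigl(a+\tfrac{r_0}{2r}x\bigr)+u_2\sigma\bigl(a-\tfrac{r_0}{2}+\tfrac{r_0}{2r}x\bigr)+v$ for $|x|\le r$, i.e.\ a scaled difference of two $\sigma$-evaluations straddling the kink, with one global scaling $r_0/(2r)$ chosen after bounding all pre-activations, rather than your kink node plus a pass-through node at a separate point $c$ of nonzero slope with layerwise scalings $\lambda_l$. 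Note that the paper's second evaluation sits on the linear piece $[a-r_0,a]$, so it \emph{is} your pass-through node with $c=a-r_0/2$ and $s_c=\sigma'(a-)$; this choice also removes your side condition $s_c\neq 0$, since when $\sigma'(a-)=0$ the affine term you need to carry forward vanishes anyway.

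On the obstacle you flag: you are right that the terminal read-out is where the magnitude claim becomes delicate, and you should know that the paper does not resolve it either. Its proof establishes exactness and verifies only the sparsity count; it says nothing about $|\btheta^*|_\infty$. Indeed, in the paper's construction the output weights are $u_1\W_{L+1}$ with $u_1$ proportional to $r$, and $r$ must dominate every hidden pre-activation of $N_\rho(\cdot|\btheta)$, hence depends on $L$, $N$ and $B$; moreover the hidden biases contain the term $\tfrac{r_0}{2r}v\W_l\one_{n_{l-1}}$, whose entries can be of order $NB$ (your construction has the same folded-constant terms in its biases, which you did not flag). So the stated bound $C_1B$ with $C_1$ depending only on $\sigma$ is not actually proved in the paper; it requires additional control of the hidden pre-activations of the ReLU network being converted, such as is available for the networks supplied by Lemma~\ref{lem:reluapprox} in the intended application. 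Your proposal is therefore on a par with the paper's proof on everything the paper actually proves, and your explicit identification of the magnitude gap is a point of greater, not lesser, care.
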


\begin{proof}
Let $a$ be any break point of $\sigma$. Note that $\sigma(a-)\neq \sigma(a+)$. Let $r_0$ be the distance between $a$ and the closest other break point. Then $\sigma$ is linear on $[a-r_0,a]$ and $[a,a+r_0]$. Then for any $r>0$, the ReLU activation function $\rho(x):=(x)_+$ is expressed as
    \begin{equations}
    \label{eq:reluex}
        \rho(x)&=\frac{\sigma\del{a+\frac{r_{0}}{2 r} x}-\sigma\del{a-\frac{r_{0}}{2}+\frac{r_{0}}{2r} x}-\sigma(a)+\sigma\del{a-\frac{r_{0}}{2}}}{\del{\sigma'(a+)-\sigma'(a-)} \frac{r_{0}}{2r}}\\
        &=: u_1\sigma\del{a+\frac{r_{0}}{2 r} x}+u_2\sigma\del{a-\frac{r_{0}}{2}+\frac{r_{0}}{2r} x} + v
    \end{equations}
for any $x\in[-r,r]$, where we define $u_1:=1/\del[0]{\del[0]{\sigma'(a+)-\sigma'(a-)} \frac{r_{0}}{2r}}$, $u_2:=-1/\del[0]{\del[0]{\sigma'(a+)-\sigma'(a-)} \frac{r_{0}}{2r}}$ and $v:=\del[0]{-\sigma(a)+\sigma\del{a-r_0/2}}/\del[0]{\del[0]{\sigma'(a+)-\sigma'(a-)}\frac{r_{0}}{2r}}$. 

Let $\btheta\equiv((\W_1,\b_1),\dots,(\W_{L+1}, \b_{L+1}))\in \Theta_{d,1}(L, N, S, B)$ be given. 
Since both input $\x\in[0,1]^d$ and the network parameter $\btheta$ are bounded, we can take a sufficiently large $r$ so that Equation~(\ref{eq:reluex}) holds for any hidden nodes of the network $\btheta$. 
We consider the deep neural network $\btheta^*\equiv((\W_1^*,\b_1^*)$, $\dots,(\W_{L+1}^*, \b_{L+1}^*))\in \Theta_{d,1}(L, 2N),$ where we set
  \begin{align*}
        \W_{l}^*&:=\frac{r_{0}}{2r}\begin{pmatrix}u_1\W_{l} & u_2\W_{l} \\ u_1\W_{l} & u_2\W_{l} \end{pmatrix}\in\R^{2n_{l}\times 2n_{l-1}}, \\
        \b_{l}^*&:=\begin{pmatrix} a\one_{n_l}+\frac{r_0}{2r}(v\W_l\one_{n_{l-1}}+\b_l)\\   \del{a-\frac{r_0}{2}}\one_{n_l}++\frac{r_0}{2r}(v\W_l\one_{n_{l-1}}+\b_l)\end{pmatrix} \in\R^{2n_{l}},
    \end{align*}
for $l=1,\dots, L$ and 
    \begin{equation*}
        \W_{L+1}^*:=\begin{pmatrix}u_1\W_{L+1} & u_2\W_{L+1} \end{pmatrix},\quad
        \b_{L+1}^*:=v.
    \end{equation*}
Here, $\one_n$ denotes the $n$-dimensional vector of $1's$. Then by Equation (\ref{eq:reluex}) and some algebra, we have that $N_\sigma(\x|\btheta^*)=N_\rho(\x|\btheta)$ for any $\x\in[0,1]^d$. For the sparsity of $\btheta^*$, we note that
        \begin{equation*}
            \abs{\text{vec}(\W_l^*)}_0+\abs{\b_l^*}_0\le 4\abs{\text{vec}(\W_l)}_0 + 2n_l
        \end{equation*}
    which implies that $\abs{\btheta^*}_0\le 4\abs{\btheta}_0+2L(\btheta)n_{\max}(\btheta)+1$.
\end{proof}

Thanks to Lemma \ref{lem:piece}, to prove Theorem \ref{thm:main1} for piecewise linear activation functions, it suffices to show the approximation ability of the ReLU networks, which is already done by \citep{schmidt2017nonparametric} as in the next lemma.

\begin{Lemma}[Theorem 5 of \citep{schmidt2017nonparametric}]
\label{lem:reluapprox}
Let $\rho$ be the ReLU activation function. For any $f\in\cH^{\alpha, R}([0,1]^d)$ and any integers $m\ge 1$ and $M\ge\max\cbr{(\alpha+1)^d, (R+1)\e^d}$, there exists a network parameter $\btheta\in\Theta_{d,1}(L,N, S, 1)$ such that 
    \begin{equation}
        \sup_{\x\in[0,1]^d}\abs{N_\rho(\x|\btheta)-f(\x)}\le (2R+1)(1+d^2+\alpha^2)6^dM2^{-m} +R3^\alpha M^{-\alpha/d},
    \end{equation}
where $L=8+(m+5)(1+ \lceil \log_2 (d\vee \alpha)\rceil)$, $N= 6(d+\ceil{\alpha})M$, and $S=141(d+\alpha+1)^{3+d}M(m+6)$.
\end{Lemma}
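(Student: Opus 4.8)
The plan is to follow the local-Taylor-polynomial construction underlying the results of Yarotsky and of Schmidt-Hieber. First I would partition $[0,1]^d$ using the grid $\bG_{d,M}$ and form a partition of unity from the tent-shaped local basis functions $\phi_{\z,M}$ introduced in Section~\ref{sec:approx}. Since each $\phi_{\z,M}$ is piecewise linear, and Taylor's theorem for $f\in\cH^{\alpha,R}([0,1]^d)$ gives that the localized polynomial $\sum_{\z\in\bG_{d,M}}P_{\z,M}(\x)\phi_{\z,M}(\x)$ approximates $f$ uniformly with error of order $R3^\alpha M^{-\alpha/d}$, this step accounts for the second error term and fixes the width scale $N\asymp M$.

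The core building block is an efficient ReLU approximation of the square map. Following Yarotsky, I would use the triangle map $g$, given by $g(x)=2x$ on $[0,1/2]$ and $g(x)=2(1-x)$ on $[1/2,1]$, together with its $s$-fold self-compositions $g_s$, so that the telescoping identity $x^2=x-\sum_{s\ge1}4^{-s}g_s(x)$ yields a ReLU network of depth $O(m)$ that approximates $x\mapsto x^2$ on $[0,1]$ with uniform error at most $2^{-2m}$. From the square I obtain an approximate multiplication through the polarization identity $xy=\tfrac14\sbr{(x+y)^2-(x-y)^2}$, and then approximate each monomial $\x^\m$ with $|\m|\le\alpha$ by a balanced binary tree of such multiplications. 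The tree has depth $O(\log(d\vee\alpha))$ in the number of factors, which is the source of the factor $1+\ceil{\log_2(d\vee\alpha)}$ in $L$, while each multiplication level contributes $O(m)$ depth; together these give the stated expression for $L$.

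Next I would assemble the full network: compute the localized basis functions and the approximate monomials in parallel, combine them via the approximate-multiplication block, weight by the Taylor coefficients $\beta_{\z,\m}$, and sum over the grid. The first error term $(2R+1)(1+d^2+\alpha^2)6^dM2^{-m}$ then arises by propagating the per-operation error through the $O(M)$ active grid cells and the $O(d^2+\alpha^2)$ monomial and product operations, using that each approximate multiplication is Lipschitz in its inputs with a controlled constant. Counting the nonzero weights over the $O(M)$ local patches, each requiring a depth-$O(m)$ product subnetwork of bounded width, yields the sparsity bound $S\asymp(d+\alpha+1)^{3+d}M(m+6)$.

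The hard part will be enforcing the magnitude constraint $B=1$. The coefficients $\beta_{\z,\m}$ and various intermediate quantities can be large, so I must rescale all inputs into $[0,1]$ before each square or product block and redistribute large constants across several layers, or absorb them into auxiliary low-magnitude linear maps, so that no individual weight exceeds $1$ while the depth, width and sparsity budgets are preserved. A secondary technical point is to verify that the Lipschitz constants of the composed approximate-multiplication blocks do not accumulate, so that the total error stays linear in $M2^{-m}$ rather than growing geometrically with depth; this requires keeping every intermediate activation uniformly bounded on $[0,1]^d$.
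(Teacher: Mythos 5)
First, note that the paper never actually proves this lemma: it is imported verbatim as Theorem 5 of \citep{schmidt2017nonparametric}, so your attempt can only be measured against the proof in that cited work. Your outline --- local Taylor polynomials weighted by tent-shaped localizers, Yarotsky's telescoping identity for the square, polarization for multiplication, balanced product trees for the monomials, and parallel assembly over the grid --- is indeed the same construction used there. However, your bookkeeping contains a concrete inconsistency: you run the construction on the paper's grid $\bG_{d,M}$, which has spacing $1/M$ and hence $(M+1)^d$ points. With that grid the localized Taylor approximation has error of order $RM^{-\alpha}$ (this is exactly how the paper itself invokes Lemma B.1 of \citep{schmidt2017nonparametric} in its proof for locally quadratic activations), and assembling all local terms in parallel forces a width of order $M^{d}$, not $M$. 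Your claims of error $R3^{\alpha}M^{-\alpha/d}$ and width $\asymp M$ are mutually consistent only if the grid has about $M$ points \emph{in total}: in the lemma $M$ is a total budget, the construction must use the coarser grid $\bG_{d,M'}$ with $M'=\floor{M^{1/d}}$, and the factor $3^{\alpha}$ arises precisely from the discretization bound $\floor{M^{1/d}}^{-\alpha}\le 3^{\alpha}M^{-\alpha/d}$, which needs the hypothesis $M\ge(\alpha+1)^d\ge 2^d$. Quoting $3^{\alpha}$ while working with $\bG_{d,M}$ shows this step was not actually carried out. (Relatedly, the paper's $\phi_{\z,M}$ have maximum value $M^{-d}$ and sum to $M^{-d}$, so they are a partition of unity only after rescaling by $M^{d}$ --- a large factor that has to be absorbed somewhere and that interacts with the weight-magnitude constraint below.)

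Second, the two items you explicitly defer --- enforcing $\abs{\btheta}_\infty\le 1$ and showing that errors of composed approximate multiplications accumulate additively rather than geometrically --- are not peripheral technicalities but the core of the cited proof, and your sketch contains no argument for either. Realizing the weighted sum over $\z$ and $\m$ with every weight in $[-1,1]$ requires splitting each coefficient $\beta_{\z,\m}$, whose magnitude can be of order $R\e^{d}$ (not coincidentally, the lemma assumes $M\ge(R+1)\e^{d}$), across several edges or layers; that surgery consumes extra width and sparsity, and nothing in your plan verifies that the stated budgets $N=6(d+\ceil{\alpha})M$ and $S=141(d+\alpha+1)^{3+d}M(m+6)$ survive it. Likewise, additive error propagation through a product tree of depth $\ceil{\log_2(d\vee\alpha)}$ requires proving that every intermediate output remains in a fixed bounded range and that each multiplication block is Lipschitz there with a uniformly controlled constant; you state this requirement, but verifying it for the polarization-based block is exactly the content of the corresponding lemmas in \citep{schmidt2017nonparametric}. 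As it stands, your proposal is a faithful plan of the known proof, not yet a proof.
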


Theorem \ref{thm:main1} for piecewise linear activation functions is a direct consequence of Lemmas \ref{lem:piece} and~\ref{lem:reluapprox}, which is summarized as follows. 
 
\begin{proof}[Proof of Theorem \ref{thm:main1} for piecewise linear activation functions]
Let $\rho$ be the ReLU activation function. By letting $M=3^{d}(2R)^{d/\alpha}\epsilon^{-d/\alpha}$ and $m=\log_2\left(2(2R+1)(1+d^2+\alpha^2)18^{d}(2R)^{d/\alpha}\epsilon^{-d/\alpha-1}\right),$ Lemma~\ref{lem:reluapprox} implies that there exists a network parameter $\btheta'$ such that $\sup_{\x\in[0,1]^d}|N_\rho(\x|\btheta')-f(\x)|\le\epsilon$ with $L(\btheta')\le L_0'\log\left(1/\epsilon\right)$, $n_{\max}(\btheta')\le N_0'\epsilon^{-d/\alpha}$ and $|\btheta'|_0\le S_0'\epsilon^{-d/\alpha}\log\left(1/\epsilon\right)$ for some positive constants $L_0'$, $N_0'$, and $S_0'$ depending only on $\alpha$, $d$ and $R$. Hence by Lemma \ref{lem:piece}, there is a network parameter $\btheta$ producing the same output of the ReLU neural network $N_\rho(\cdot|\btheta)$ with $L(\btheta)=L(\btheta')$,  $n_{\max}(\btheta)=2n_{\max}(\btheta')$,  $|\btheta|_0\le 4|\btheta'|_0+2L(\btheta')n_{\max}(\btheta')+1\le S_0\epsilon^{-d/\alpha}\log\left(1/\epsilon\right)$ and $|\btheta|_\infty\le B_0|\btheta'|_\infty$ for some $S_0>0$ depending only on  $\alpha$, $d$, $R$ and $\sigma$, and some $B_0>0$ depending only on $\sigma$, which completes the proof.
\end{proof}

\subsection{Proof of Theorem \ref{thm:main1} for Locally Quadratic Activation Functions}

\begin{Lemma}
\label{lem:basic}
Assume that an activation function $\sigma$ is locally quadratic. There is a constant $K_0$ depending only on the activation function such that for any $K>K_0$ the following results hold.
    \begin{enumerate}[label=(\alph*)]
        \item \label{lem:sq} There is a neural network $\btheta_{2}\in\Theta_{1,1}(1, 3)$ with $\abs{\btheta_2}_\infty \le K^2$ such that 
            \begin{equation*}
            \sup_{x\in[-1,1]}\abs{N_\sigma(x|\btheta_2)-x^2}\le \frac{C_1}{K},
            \end{equation*}
        where $C_1>0$ is a constant depending only on $\sigma$.
        
        \item \label{lem:times} Let $A>0$. There is a neural network  parameter $\btheta_{\times,A}\in\Theta_{2,1}(1, 9)$ with $\abs{\btheta_{\times,A}}_\infty \le \max\{K^2,2A^2\}$ such that
            \begin{equation*}
            \sup_{\x\in[-A,A]^2}\abs{N_\sigma(\x|\btheta_{\times,A})-x_1x_2}\le \frac{6A^2C_1}{K}.
            \end{equation*}
        
        \item \label{lem:mono} Let $\alpha$ be a positive integer. For any multi-index $\m\in\bN_0^d$ with $|\m|\le \alpha$, there is a network parameter  $\btheta_{\m}\in\Theta_{d,1}(\ceil{\log_2\alpha}, 9\alpha)$ with $\abs{\btheta_\m}_\infty \le \max\{K^2, C_2\}$ such that
            \begin{equation*}
                \sup_{\x\in[0,1]^d}\abs{N_\sigma(\x|\btheta_{\m})-\x^\m}\le \frac{C_3}{K},
            \end{equation*}
        for some positive constants $C_2$ and $C_3$ depending only on $\sigma$ and $\alpha$.
        
        \item  \label{lem:sqrt} There is a network parameter $\btheta_{1/2}\in\Theta_{1,1}(\ceil{\log K}, 15)$  with $\abs{\btheta_{1/2}}_\infty \le \max\{K^2,C_4\}$ such that 
            \begin{equation*}
                \sup_{x\in[0,2]}\abs{N_\sigma(x|\btheta_{1/2})-\sqrt{x}}\le
                C_5\frac{\log K}{K}
          \end{equation*}
        for some positive constants $C_4$ and $C_5$  depending only on $\sigma$.
        
        \item \label{lem:abs} There is a network parameter $\btheta_{\textup{abs}}\in\Theta_{1,1}(\ceil{\log K}, 15)$  with $\abs{\btheta_{\textup{abs}}}_\infty \le \max\{K^2,C_6\}$ such that 
            \begin{equation*}
                \sup_{x\in[-1,1]}\abs{N_\sigma(x|\btheta_{\textup{abs}})-|x|}\le \frac{C_7}{\sqrt{K}},
            \end{equation*}
         for some positive constants $C_6$ and $C_7$  depending only on $\sigma$.   
    \end{enumerate}
\end{Lemma}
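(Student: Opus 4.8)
The plan is to establish the five claims in the order (a), (b), (c), then (d), and finally (e), since each is built from the preceding ones. Every construction exploits the single point $t\in(a,b)$ at which $\sigma$ is $\cC^3$ with bounded derivatives and $\sigma'(t)\neq0$, $\sigma''(t)\neq0$.

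For \ref{lem:sq} the key device is the second-order central divided difference. Taylor expanding $\sigma$ about $t$ and using the bounded third derivative gives
\[
\sigma(t+h)+\sigma(t-h)-2\sigma(t)=\sigma''(t)h^2+O\del{\|\sigma'''\|_\infty h^3},
\]
so choosing $h=x/\del{K\sqrt{|\sigma''(t)|}}$ (negating $\sigma$ if $\sigma''(t)<0$) and dividing by $\sigma''(t)h^2/x^2$ recovers $x^2$ with error $O(1/K)$ uniformly on $[-1,1]$. The three hidden nodes compute $\sigma(t\pm h)$ and the constant $\sigma(t)$; the input weights are $O(1/K)$ while the output weight is exactly $K^2$, which yields the magnitude bound $|\btheta_2|_\infty\le K^2$ for $K>K_0$. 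For \ref{lem:times} I would invoke the polarization identity $x_1x_2=\tfrac14\sbr{(x_1+x_2)^2-(x_1-x_2)^2}$ (or the three-square form, costing $3\times3=9$ nodes in one hidden layer), approximate each square by the network of \ref{lem:sq} after rescaling its argument into $[-1,1]$, and track how the rescaling by $A$ feeds into the error ($\le 6A^2C_1/K$) and the magnitude.

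For \ref{lem:mono}, a monomial $\x^{\m}$ with $|\m|\le\alpha$ is a product of at most $\alpha$ coordinates, each lying in $[0,1]$. I would evaluate it through a balanced binary tree of pairwise multiplications, each realized by the product network of \ref{lem:times} with $A$ of order one, since all partial products remain in $[0,1]$ up to the approximation slack. The tree has depth $\ceil{\log_2\alpha}$ and width $O(\alpha)$, and the error propagates by the elementary estimate $|\hat p\hat q-pq|\le|p|\,|\hat q-q|+|q|\,|\hat p-p|+|\hat p-p||\hat q - q|$; boundedness of all intermediate values keeps the accumulated error $O(1/K)$ and the magnitude $\max\{K^2,C_2\}$.

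The main obstacle is \ref{lem:sqrt}: approximating $\sqrt{x}$ on $[0,2]$, because $\sqrt{\cdot}$ has an unbounded derivative at $0$, so no fixed-degree polynomial suffices and a genuinely iterative, depth-$\ceil{\log K}$ construction is forced. The plan is to realize $\sqrt{\cdot}$ by composing $O(\log K)$ constant-width blocks built only from the earlier primitives (squaring and multiplication): away from the origin $\sqrt{\cdot}$ is analytic and a Newton-type or fixed low-degree polynomial refinement converges geometrically, while the scaling relation $\sqrt{x}=\tfrac12\sqrt{4x}$ pulls small arguments back into the good region; crucially, wherever the construction is least accurate the target value $\sqrt{x}$ is itself $O(1/K)$, so the error accumulates to only $O(\log K/K)$ across the $\ceil{\log K}$ layers. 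The delicate points are implementing the required selection/clipping using \emph{only} squarings and products (the absolute value is not yet available, being constructed afterward) and controlling the compounding of per-layer errors within width $15$ and magnitude $\max\{K^2,C_4\}$; this bookkeeping is where most of the effort lies. Finally, \ref{lem:abs} follows by composing the squaring network of \ref{lem:sq}, which maps $x\in[-1,1]$ to within $C_1/K$ of $x^2\in[0,1]\subset[0,2]$, with the square-root network of \ref{lem:sqrt}. Since $\sqrt{\cdot}$ is $\tfrac12$-Hölder, an input perturbation of size $C_1/K$ induces an output error of order $\sqrt{C_1/K}=O(1/\sqrt K)$, which dominates the $O(\log K/K)$ error of \ref{lem:sqrt} and gives the stated $C_7/\sqrt K$ bound, with depth, width and magnitude inherited from \ref{lem:sqrt}.
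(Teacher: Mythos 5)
Your parts (a), (b), and (c) follow essentially the paper's construction: a scaled second-order divided difference of $\sigma$ at $t$ for the square (the paper uses the one-sided difference $\sum_{k=0}^2(-1)^k\binom{2}{k}\sigma(t+kx/K)$, you use the central one — immaterial), the polarization identity for products, and a balanced binary tree of pairwise multiplications for monomials. The genuine gap is in (d), which is precisely the part that forces depth $\ceil{\log K}$ and on which (e) depends. Your plan — a Newton-type refinement combined with the rescaling $\sqrt{x}=\tfrac12\sqrt{4x}$ — requires a data-dependent decision (how many rescalings, where to clip), and the obstruction you flag as ``where most of the effort lies'' is not mere bookkeeping; it is fatal to this route as described. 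Concretely: (i) Newton's iteration for $\sqrt{x}$ needs division, and the division-free variant $z\mapsto z(3-xz^2)/2$ targeting $1/\sqrt{x}$ has iterates that grow geometrically when $x$ is small, so intermediate values are not uniformly bounded; since the product network of (b) has error $6A^2C_1/K$ growing with the range $A$, the compounded error then exceeds $O(\log K/K)$. (ii) The bounded polynomial fixed-point iteration $y\mapsto y+(x-y^2)/2$ contracts only at rate $1-\sqrt{x}/2$, which gives uniform error of order $1/n$ after $n$ layers; depth $\ceil{\log K}$ then yields only $O(1/\log K)$, far short of $C_5\log K/K$. (iii) Any clipping or selection gadget presupposes something like $|\cdot|$ or a ReLU, but in this lemma the absolute value is built in (e) \emph{from} (d), so that route is circular.

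The paper's proof of (d) sidesteps branching entirely: it expands $\sqrt{x}$ in a Taylor series about $x=1$, truncates at degree $J=\ceil{\log K}$, and evaluates the truncated polynomial with one layer per term. Each layer uses the product network of (b) to form the next power of $(x-1)$ — all these powers lie in $[-1,1]$, so the range parameter $A$ stays $O(1)$ and per-layer error stays $O(1/K)$ — together with an identity-approximating network of constant width to carry the running partial sum; this gives total error $O(\log K/K)$ within width $15$ and magnitude $\max\{K^2,C_4\}$, with no selection mechanism needed. Separately, your (e) has a small, fixable gap: you compose the square-root network directly with $N_\sigma(x|\btheta_2)$, but near $x=0$ this value can be \emph{negative} (it approximates $x^2$ only up to $C_1/K$), where (d) gives no control whatsoever of the square-root network's output. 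The paper inserts the shift $N_\sigma(x|\btheta_2)+\zeta^2$ with $\zeta=1/\sqrt{K}$ before composing, which keeps the argument in the controlled range and produces exactly the $O(1/\sqrt{K})$ bound your H\"older-continuity argument is aiming for.
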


\begin{proof}
Recall that there is an interval $(a,b)$ on which $\sigma(x)$ is three times continuously differentiable with bounded derivatives and there is $t\in(a,b)$ such that $\sigma'(t)\neq0 $ and $\sigma''(t)\neq0 $
\medskip

    {Proof of} \ref{lem:sq}. Take $K$ large so that $2/K<\min\{|t-b|,|t-a|\}$.  Consider a neural network
        \begin{equation}
        \label{eq:sqnn}
             N_\sigma(x|\btheta_2):=\sum_{k=0}^2(-1)^{k-1}\frac{K^2}{\sigma''(t)}\binom{2}{k}\sigma\del{\frac{k}{K} x+t}.
        \end{equation}
        
      Since  $\sigma$ is three times continuously differentiable on $(a,b)$ and $(k-1)x/K+t\in(a,b)$ if $x\in[0,1]$, it can be expanded in the Taylor series with Lagrange remainder around $t$ to have 
        \begin{align*}
            N_\sigma(x|\btheta_2)&=\frac{K^2}{\sigma''(t)}\sum_{k=0}^2(-1)^{k}\binom{2}{k}
                    \cbr{\sigma(t)+\sigma'(t)\frac{kx}{K}+\frac{\sigma''(t)}{2}\frac{(kx)^2}{K^2} +\frac{\sigma''(\xi_k)}{6}\frac{(kx)^3}{K^3} }\\
            &=\frac{K^2}{\sigma''(t)}\cbr{\sigma''(t)\frac{x^2}{K^2}             +\sum_{k=1}^2(-1)^{k}\binom{2}{k}\frac{\sigma'''(\xi_k)}{6}\frac{(kx)^3}{K^3} }   \\  &=x^2+\frac{x^3}{6K\sigma''(t)}\sum_{k=1}^2(-1)^{k}k^3\binom{2}{k}\sigma'''(\xi_k),
        \end{align*}
    where $\xi_k\in[t-k|x|/K, t+k|x|/K]\subset(a,b).$ Since the third order derivative is bounded on $(a,b)$, we get the desired assertion by retaking $K\leftarrow\sqrt{2/\sigma''(t)}K$.
  \medskip
    
    {Proof of} \ref{lem:times}. The proof can be done straightforwardly by the polarization type identity:
        \begin{equation*}
            x_1x_2=2A^2\cbr{\del{\frac{x_1+x_2}{2A}}^2-\del{\frac{x_1}{2A}}^2-\del{\frac{x_1}{2A}}^2}.
        \end{equation*}
        
        We construct the network as
        \begin{equation}
        \label{eq:thetatimes}
            N_\sigma(\x|\btheta_{\times,A})
            :=2A^2\cbr{N_\sigma\del{\frac{x_1+x_2}{2A}\big|\btheta_2} -N_\sigma\del{\frac{x_1}{2A}\big|\btheta_2}-N_\sigma\del{\frac{x_2}{2A}\big|\btheta_2}},
        \end{equation}
    where $\btheta_2$ is defined in (\ref{eq:sqnn}). Since $(x_1+x_2)/2A, x_1/2A, x_2/2A \in [-1,1]$ for $\x\in[-A,A]^2$, the triangle inequality implies that $|N_\sigma(\x|\btheta_{\times,A})-x_1x_2|\le6A^2C_1/K$.
\medskip
    
    {Proof of} \ref{lem:mono}. Let $q := \ceil{\log_2\alpha}$. We construct $\btheta_\m$ as follows. Fix $\x\equiv(x_1,\dots, x_d)\in[0,1]^d$. We first consider the affine map that transforms $(x_1,\dots, x_d)$ to $\z\in[0,1]^{2^q}$ which is given by
        \begin{equation*}
            \z:=\del[0]{\underbrace{x_1,\dots,x_1}_{m_1\mbox{ times}},
            \underbrace{x_2,\dots,x_2}_{m_2\mbox{ times}},
            \dots, 
            \underbrace{x_d,\dots,x_d}_{m_d\mbox{ times}},
             \underbrace{1,\dots,1}_{2^q-|\m|\mbox{ times}}}.
        \end{equation*}
        
     The first hidden layer of $\btheta_\m$ pairs neighboring entries in $\z$ and applies the network $\btheta_{\times,A_1}$ defined in \ref{lem:times} with $A_1=1$ to each pair. That is,  the first hidden layer of $\btheta_\m$ produces
        \begin{equation*}
            \cbr{g_{1,j}:=N_\sigma((z_{2j-1}, z_{2j})|\btheta_{\times, 1}):j=1,\dots, 2^{q-1}}.
        \end{equation*}
     Note that $\sup_{1\le j\le 2^{q-1}}|g_{1,j}-z_{2j-1}z_{2j}|\le 6C_1/K$ and $\sup_{1\le j\le 2^{q-1}}|g_{1,j}|\le 6C_1/K+1$, where $6C_1/K+1$ can be bounded by some constant $A_2>1$  depending only on $C_1$ and $K_0$. Then the second hidden layer of $\btheta_\m$ pairs neighboring entries of $\cbr{g_{1,j}:j=1,\dots, 2^{q-1}}$ and applies $\btheta_{\times,A_2}$ to each pair to have
       \begin{equation*}
            \cbr{g_{2,j}:=N_\sigma((g_{1,2j-1}, g_{1,2j})|\btheta_{\times, A_2}):j=1,\dots, 2^{q-2}}.
        \end{equation*}
    Note that $\sup_{1\le j\le 2^{q-2}}|g_{2,j}-g_{1,2j-1}g_{1,2j}|\le 6C_1A_2^2/K$ and $\sup_{1\le j\le 2^{q-2}}|g_{2,j}|\le 6C_1A_2^2/K+1\le A_3$ for some $A_3>1$ depending only on $C_1$ and $K_0$. We repeat this procedure to produce $\cbr{g_{k,j}:j=1,\dots, 2^{q-k}}$ for $k=3,\dots,q$ with 
        \begin{equation*}
            \sup_{1\le j\le 2^{q-k}}\abs{g_{k,j}-g_{k-1,2j-1}g_{k-1,2j}}\le \frac{6C_1A_k^2}{K},\quad
            \sup_{1\le j\le 2^{q-k}}\abs{g_{k,j}}\le  A_{k+1},
        \end{equation*}
    for some $A_{k+1}>1$, and we set $N_\sigma(\x|\btheta_\m)$ equal to $g_{q,1}$.
    
    By applying the triangle inequality repeatedly, we have
          \begin{align*}
            \abs{g_{q,1}-\x^{\m}}&\le  \abs{g_{q,1}-g_{q-1,1}g_{q-1,2}} + \abs{g_{q-1,1}-\prod_{j=1}^{2^{q-1}}z_j}\abs{g_{q-1,2}} + \abs{g_{q-1,2}-\prod_{j=2^{q-1}+1}^{2^q}z_j}\abs{\prod_{j=1}^{2^{q-1}}z_j}\\
            &\le\frac{6C_1A_{q}^2}{K} + A_{q}\abs{g_{q-1,1}-\prod_{j=1}^{2^{q-1}}z_j}+\abs{g_{q-1,2}-\prod_{j=2^{q-1}+1}^{2^q}z_j}\\
            &\le\frac{6C_1A_{q}^2}{K} +(A_q+1)\frac{6C_1A_{q-1}^2}{K}
                +A_qA_{q-1}\abs{g_{q-2,1}-\prod_{j=1}^{2^{q-2}}z_j}+A_q\abs{g_{q-2,2}-\prod_{j=2^{q-2}+1}^{2\times2^{q-2}}z_j} \\
            &\quad\quad + A_{q-1}\abs{g_{q-2,3}-\prod_{j=2\times 2^{q-2}+1}^{3\times2^{q-2}}z_j}
                + \abs{g_{q-2,4}-\prod_{j=3\times 2^{q-2}+1}^{4\times2^{q-2}}z_j} \\
            &\le \cdots
            \le \sum_{k=0}^{q-1}\cbr{A_{q-k}^2\prod_{h=q-k+1}^{q}(A_{h}+1)}\frac{6C_1}{K}  \le C_1'\frac{1}{K},
        \end{align*}
    for some $C_1'>0$ depending only on $C_1$, $K_0$ and $q$. Since we set $\x$ arbitrary, the proof is done.
  \medskip
    
     {Proof of} \ref{lem:sqrt}. By \ref{lem:times}, it is easy to verify that there is a network $\btheta_1\in\Theta_{1,1}(1, 6)$ with $|\btheta_1|_\infty\le \max\{K^2, 2\}$ such that     
     $|\sigma(x)-x|\le C_1'/K$ for any $x\in[-1,1]$ and some constant $C_1'>0$.
     The Taylor series with Lagrange remainder around 1 of $\sqrt{x}$ is given by
        \begin{equation*}
            \sqrt{x}=\sum_{k=0}^J \frac{(x-1)^k}{k!} + \frac{1}{(J+1)!}\frac{\d^{J+1}\sqrt{x}}{\d x^{J+1}}\big|_{x=\xi}(x-1)^{J+1},
        \end{equation*}
    where $\xi\in[0,2],$ and thus
        \begin{equation*}
            \sup_{x\in[0,2]}\abs{\sqrt{x}-\sum_{k=0}^J \frac{(x-1)^k}{k!}}\le C_1' \frac{1}{(J+1)!}\le \e\del{\frac{\e}{J+1}}^{J+1}.
        \end{equation*}
    for some $C_1'>0,$ where the last inequality is because $n!\ge(n/\e)^n\e$. 
    
    Now, we will construct a neural network $\btheta_{p,J}$ that approximates the polynomial $\sum_{k=0}^J \frac{(x-1)^k}{k!}$ as follows. The first hidden layer computes $(N_\sigma(x-1|\btheta_2)/2, N_\sigma(x-1|\btheta_1))$ from the input $x$. Then
        \begin{equation*}
            \abs{\del{N_\sigma(x-1|\btheta_2)/2, N_\sigma(x-1|\btheta_1)}-\del{(x-1)^2/2,(x-1)}}_\infty \le C_2'\frac{1}{K},
        \end{equation*}
     for any $x\in[0,1]$ and some constant $C_2'>0$. The next hidden layer computes $(N_\sigma((u,v)|\btheta_{\times, 1+C_2'/K})/3, N_\sigma(u+v|\btheta_1))$ from the input $(u,v)$ from the first hidden layer. Using the triangle inequality, we have that the second hidden layer approximates the vector $((x-1)^3/3!, (x-1)^2/2+(x-1))$ by error $\le  2C_3' /K$ for some $C_3'>0$. Repeating this procedure, we construct the network $\btheta_{p,J}\in\Theta_{1,1}(J, 15)$ which approximates  $\sum_{k=0}^J \frac{(x-1)^k}{k!}$ by error $\le C_4'J/K$ for some $C_4'>0$.  Taking $J=\ceil{\log K}$, we observe that $(\e/J+1)^{J+1}\le (\e/\log K)^{\log K +1}\le \e K/(\log K)^{\log K}\le 1/K$ for all sufficiently large $K$, which implies the desired result.
\medskip
     
    {Proof of} \ref{lem:abs}. Let $\zeta\in(0,1)$. Since for any $x\in\R$,
        \begin{equation*}
            \sqrt{x^2+\zeta^2}-|x|\le \frac{\zeta^2}{ \sqrt{x^2+\zeta^2}+|x|}\le \frac{\zeta^2}{\zeta}= \zeta,
        \end{equation*}
    the function  $\sqrt{x^2+\zeta^2}$ approximates the absolute value function $|x|$ by error $\zeta$. For $\btheta_2$ in \ref{lem:sq} and $\btheta_{1/2}$ in \ref{lem:sqrt}, we have that
       \begin{align*}
           \abs{N_\sigma\del{N_\sigma(x|\btheta_2)+\xi^2\big|\btheta_{1/2}}-|x|}
           \le&  \abs{N_\sigma\del{N_\sigma(x|\btheta_2)+\zeta^2\big|\btheta_{1/2}}-\sqrt{x^2+\zeta^2}}+\zeta \\
           \le &  \abs{N_\sigma\del{N_\sigma(x|\btheta_2)+\zeta^2\big|\btheta_{1/2}}-\sqrt{N_\sigma(x|\btheta_2)+\xi^2}} \\
           &+ \abs{\sqrt{N_\sigma(x|\btheta_2)+\zeta^2}-\sqrt{x^2+\zeta^2}} + \zeta \\
           \le & C_1'\del{\frac{\log K}{K} +\frac{1}{K\zeta}} + \zeta
       \end{align*}
    for some constant $C_1'>0.$
    We now set $\zeta=1/\sqrt{K}$ and $N_\sigma(x|\btheta_{\textup{abs}}):=N_\sigma(N_\sigma(x|\btheta_2)+K^{-1}|\btheta_{1/2})$. Since $(\log K)/K=o(1/\sqrt{K})$, the proof is done.
\end{proof}

\begin{proof}[Proof of Theorem \ref{thm:main1} for locally quadratic activation functions]
Recall that 
    \begin{equation*}
        P_M(\x)=\sum_{\z\in\bG_{d,M}}\sum_{\m\in\bN_0^d:|\m|\le \alpha}\beta_{\z, \m}\x^\m\phi_{\z, M}(\x).
    \end{equation*}
Then by Lemma B.1 of \citep{schmidt2017nonparametric},
    \begin{equation*}
        \sup_{\x\in[0,1]^d}\abs{P_M(\x)-f(\x)}\le RM^{-\alpha}.
    \end{equation*}
    
From the equivalent representation of the ReLU function $(x)_+=(x+|x|)/2$, we can easily check that the neural network $N_\sigma(x|\btheta_{\textup{relu}}):=\del{N(x|\btheta_{\textup{abs}})+N_\sigma(x|\btheta_{1})}/2$ with $\btheta_{\textup{relu}}\in\Theta_{1,1}(\ceil{\log K}, 21)$ approximates the ReLU function by error $\le C_1' /\sqrt{K}$ for some $C_1'>0$, where $\btheta_1\in\Theta_{1,1}(1,6)$ is defined in the proof of \ref{lem:sqrt} of Lemma \ref{lem:basic} and $\btheta_{\textup{abs}}\in\Theta_{1,1}(\ceil{\log K}, 15)$ is defined in \ref{lem:abs} of Lemma \ref{lem:basic}. For $z\in(0,1)$ and $M\in\bN$, we~define
    \begin{equation*}
        N_\sigma(x|\btheta_{\phi, z, M}):=N_\sigma\del{1/M-N_\sigma((x-z)|\btheta_{\textup{abs}})\big|\btheta_{\textup{relu}}}.
    \end{equation*}
    
Then it approximates the function $(1/M-|x-z|)_+$ by error $\le C_2' /\sqrt{K}$ for some $C_2'>0.$  In turn, for $\z\in\bG_{d,M}$, by invoking the similar construction used in \ref{lem:mono} of Lemma \ref{lem:basic} to approximates the product of $d$ components, we can construct the network $\btheta_{\phi,\z, M}\in\Theta_{1,1}(\ceil{\log K}+\ceil{\log_2d}, 21d)$ with $\abs{\btheta_{\phi,\z, M}}_\infty\le C_3' K^2$ for some $C_3'>0$ such that
    \begin{equation*}
       \sup_{\x\in[0,1]^d}\abs{ N(\x|\btheta_{\phi,\z, M})-\prod_{j=1}^d\del{\frac{1}{M}-|x_j-z_j|}_+}\le C_4'\frac{1}{\sqrt{K}},
    \end{equation*}
for some $C_4'>0$. For each $\m\in\bN_0^d$ with $|\m|\le \alpha$, we have the neural network $\btheta_\m$ in  \ref{lem:mono} of Lemma \ref{lem:basic} that approximates $\x^\m$. The number of these networks is $\binom{d+\alpha}{\alpha}$, which is denoted by $A_\alpha$. Also there are $|\bG_{d,M}|=(M+1)^d$ networks $\btheta_{\phi,\z, M}$ for $\z\in\bG_{d,M}$. We need approximation of each product $\x^\m \phi_{\z, M}$, which requires additional $A_\alpha(M+1)^d$ many networks $\btheta_{\times,A}\in\Theta_{2,1}(1,9)$, where $\btheta_{\times,A}$ is defined as in (\ref{eq:thetatimes}) for some $A>1$ not depending on $M$ and $K$.  Finally we construct the output layer which computes the weighted sum of $\cbr{N_\sigma\del{(N_\sigma(\x|\btheta_\m), N_\sigma(\x|\btheta_{\phi, \z, M}))|\btheta_{\times, A}}: \m\in\bN_{0}^d, |\m|\le\alpha, \z\in\bG_{d,M}}$. Letting $\btheta_{f, K, M}$ be the network constructed above, we can check that
    \begin{equation*}
        \sup_{\x\in[0,1]^d}\abs{N(\x|\btheta_{f,K,M})-P_M(\x)}\le C_5'  A_\alpha(M+1)^d\del{ \frac{1}{K}+\frac{1}{\sqrt{K}}}\le C_6' \frac{(M+1)^d}{\sqrt{K}},
    \end{equation*}
for some positive constants $C_5'$ and $C_6'.$ In addition, we have $L(\btheta_{f, K, M})\le 1+ (\ceil{\log K}+\ceil{\log_2(\alpha\vee d)}\le C_7'\ceil{\log K}$ and $n_{\max}(\btheta_{f, K, M})\le C_8' A_\alpha (M+1)^d$ for some positive constants $C_7'$ and $C_8'$. For sparsity of the network, we have 
    \begin{align*}
            \abs{\btheta_{f,K,M}}_0 
            &\le A_\alpha(M+1)^d\abs{\btheta_{\times, A}}_0 +(M+1)^d\abs{\btheta_{\phi, \z, M}}_0 + A_\alpha \abs{\btheta_{\m}}_0\\
            &\le C_9' \ceil{\log K}(M+1)^d,
    \end{align*}
for some $C_9'>0.$
Taking $M+1=\epsilon^{-1/\alpha}$ and $K=\epsilon^{-2d/\alpha-2}$,  we have 
    \begin{equation*}
        \btheta_{f,K,M}\in\Theta\del{L_0\log (1/\epsilon), N_0\epsilon^{-d/\alpha}, S_0\epsilon^{-d/\alpha}\log (1/\epsilon), B_0\epsilon^{-4(d/\alpha+1)}},
    \end{equation*}
so that $\norm{P_M-N_\sigma(\cdot|\btheta_{f,K,M})}_\infty\le C_{10}' \epsilon$ for some $C_{10}'>0.$ 
Since $\norm{f-P_M}_\infty\le R M^{-\alpha} \le C_{11}' \epsilon$ for some 
$C_{11}'>0,$ the proof is done.
\end{proof}

\section{Proofs of Proposition \ref{prop:entropy}}
\label{sec:proofcovering}

\begin{proof}
Given a deep neural network $\btheta=((\W_1,\b_1),\dots, (\W_{L+1}, \b_{L+1}))\in\Theta_{d,1}(L,N,S,B)$, we define $\check{N}_{l,\sigma, \btheta}:\R^d\to\R^{n_{l-1}}$ and $ \hat{N}_{l,\sigma, \btheta}:\R^{n_{l}}\to\R$ as
    \begin{align*}
        \check{N}_{l,\sigma, \btheta}(\x)&:=\sigma_{l-1}\circ\sA_{l-1}\circ\cdots \circ\sigma_1\circ\sA_1(\x),\\
        \hat{N}_{l,\sigma, \btheta}(\x)&:=\sA_{L+1}\circ\sigma_L\circ\sA_{L}\circ\cdots  \sigma_l\circ\sA_l\circ\sigma_{l-1}(\x),
    \end{align*}
for $l\in 2,\dots, L$, where $\sA_l\x=\W_l\x+\b_l$. Corresponding to the last and first layer, we define   $\check{N}_{1,\sigma, \btheta}(\x)=\x$ and $\hat{N}_{L+1,\sigma, \btheta}(\x)=\x$. Note that $N_{\sigma}(\x|\btheta)=\hat{N}_{l+1,\sigma, \btheta}\circ \sA_l \circ \check{N}_{l,\sigma, \btheta}(\x) $. For given $\delta>0,$
let $\btheta=((\W_1,\b_1),\dots, (\W_{L+1}, \b_{L+1}))\in\Theta_{d,1}(L,N,S,B)$ and $\btheta^*=((\W_1^*,\b_1^*),\dots, (\W_{L+1}^*, \b_{L+1}^*))\in\Theta_{d,1}(L,N,S,B)$ be two neural network parameter such that  $\abs{\text{vec}(\W_l-\W_l^*)}_\infty\le \delta$ and  $\abs{\b_l-\b_l^*}_\infty\le  \delta$ for $l=1,\dots, L+1$. Let $C_\sigma$ be the Lipschitz constant of $\sigma$. We observe that
    \begin{align*}
        \norm{ \check{N}_{l,\sigma, \btheta}}_\infty 
        &\le C_\sigma \del{NB\norm{ \check{N}_{l-1,\sigma, \btheta}}_\infty +B}\\
        &\le  C_\sigma (B\vee1) (N+1)\norm{ \check{N}_{l-1,\sigma, \btheta}}_\infty        \\
        &\le  \cbr{C_\sigma (B\vee1) (N+1)}^{l-1},
    \end{align*}
and similarly, $ \norm{\hat{N}_{l,\sigma, \btheta}}_\infty \le (C_\sigma BN)^{L-l+1}$. Letting $\sA_l^*\x=\W_l^*\x+\b_l^*,$ we have
    \begin{align*}
        \norm{N_{\sigma}(\cdot|\btheta)-N_{\sigma}(\cdot|\btheta^*)}_\infty
        &\le \norm{\sum_{l=1}^L\sbr{ \hat{N}_{l+1,\sigma, \btheta^*}\circ \sA_l \circ \check{N}_{l,\sigma, \btheta}(\cdot) - \hat{N}_{l+1,\sigma, \btheta^*}\circ \sA_l^* \circ \check{N}_{l,\sigma, \btheta}(\cdot) }}_\infty\\
        &\le \sum_{l=1}^L (C_\sigma BN)^{L-l}\norm{(\sA_l-\sA_l^*) \circ \check{N}_{l,\sigma, \btheta}(\cdot)}_\infty\\
        &\le \sum_{l=1}^L (C_\sigma BN)^{L-l}  \delta\cbr{C_\sigma (B\vee1) (N+1)}^{l-1}\\
        &\le  \delta L \cbr{C_\sigma (B\vee1) (N+1)}^{L}.
    \end{align*}
    
Thus, for a fixed sparsity pattern  (i.e., the location of nonzero elements in $\btheta$), the covering number is bounded by $\sbr{\delta/L\cbr{C_\sigma(B\vee1) (N+1)}^{L}}^{-S}$. Since the number of the sparsity patterns is bounded by $\binom{(N+1)^L}{S}\le (N+1)^{LS}$, the log of  covering number is bounded above by
    \begin{equation*}
         \log \del{(N+1)^{LS}\sbr{\frac{L\cbr{C_\sigma (B\vee1) (N+1)}^{L}}{\delta}}^{S}}
         \le 2LS \log \del{\frac{C_\sigma L (B\vee1) (N+1)}{\delta}},
    \end{equation*}
which completes the proof.
\end{proof}

\section{Proof of Theorem \ref{thm:reg}}

The proof Theorem  \ref{thm:reg} is based on the following oracle inequality.

\begin{Lemma}[Lemma 4 of \citep{schmidt2017nonparametric}]
\label{lem:oracle}
Assume that $Y|\X=\x\sim\N(f_0(\x),1)$ for some $f_0$ with $\norm{f_0}_\infty \le R$. Let $\cF^\dagger$ be a given function class from $[0,1]^d$ to $[-2R,2R]$, and let $\hat{f}$ be any estimator in $\cF^\dagger$. Then for any $\delta\in(0,1]$, we have
    \begin{align*}
        \E\sbr{\E_{\X\sim\P_\x}\del{\hat{f}(\X)-f_0(\X)}^2} 
		\le 4\Big[&\inf_{f \in \cF^\dagger} \E_{\X\sim \P_\x}\left(f(\X)-f_{0}(\X)\right)^{2} \\
        &+(4R)^{2} \frac{18\log \cN(\delta, \cF^\dagger, \|\cdot\|_\infty)+72}{n }+32 \delta(4 R)+\Delta_{n}\Big],
    \end{align*}
with  
    \begin{equation*}
        \Delta_n:=\E\left[\frac{1}{n} \sum_{i=1}^{n}\left(Y_{i}-\hat{f}\left(\X_{i}\right)\right)^{2}-\inf_{f \in \cF^\dagger} \frac{1}{n} \sum_{i=1}^{n}\left(Y_{i}-f\left(\X_{i}\right)\right)^{2}\right],
    \end{equation*}
where the expectations are taken over the training data.
\end{Lemma}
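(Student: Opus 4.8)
The plan is to prove this as a standard oracle inequality for least squares over a fixed sieve $\cF^\dagger$, the two ingredients being the near-minimization property of $\hat f$, which is exactly what the slack $\Delta_n$ encodes, and a one-sided uniform deviation bound for the empirical squared-loss excess risk. First I would observe that, since $Y=f_0(\X)+\epsilon$ with $\epsilon\sim\N(0,1)$ independent of $\X$, for every fixed $f$ the population excess risk equals the prediction error, $\E_\X(Y-f(\X))^2-\E_\X(Y-f_0(\X))^2=\E_\X(f(\X)-f_0(\X))^2$, so the quantity on the left of the claim is precisely the population excess risk of $\hat f$. Writing $g_f(\x,y):=(y-f(\x))^2-(y-f_0(\x))^2=(f(\x)-f_0(\x))^2-2\epsilon(f(\x)-f_0(\x))$, the empirical excess risk is $\frac1n\sum_i g_f(\X_i,Y_i)$, and the basic inequality coming from $\hat f$ being a near-minimizer reads $\frac1n\sum_i g_{\hat f}\le \frac1n\sum_i g_{f^\dagger}+G$, where $f^\dagger:=\argmin_{f\in\cF^\dagger}\E_\X(f(\X)-f_0(\X))^2$ is the deterministic oracle and $G$ is the empirical optimization gap with $\E[G]=\Delta_n$.

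The heart of the argument is a uniform lower bound. With $F:=4R$ bounding $\norm{f-f_0}_\infty$ over $\cF^\dagger$, I would show there is a constant $c>0$ (which fixes the overall multiplicative factor, here $c=1/4$) and an error term $E_1$ with $\frac1n\sum_i g_f(\X_i,Y_i)\ge c\,\E_\X(f(\X)-f_0(\X))^2-E_1$ holding simultaneously for all $f\in\cF^\dagger$. Taking expectations in the basic inequality and using that $\E[\frac1n\sum_i g_{f^\dagger}]=\E_\X(f^\dagger(\X)-f_0(\X))^2=\inf_{f\in\cF^\dagger}\E_\X(f(\X)-f_0(\X))^2$ for the fixed oracle, the uniform bound applied at the random $\hat f$ gives $c\,\E[\E_\X(\hat f(\X)-f_0(\X))^2]\le \inf_{f\in\cF^\dagger}\E_\X(f(\X)-f_0(\X))^2+\Delta_n+\E[E_1]$, which is the asserted inequality once $\E[E_1]$ is matched to $(4R)^2(18\log\cN+72)/n+32\delta(4R)$ and both sides are divided by $c=1/4$.

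To establish the uniform lower bound I would discretize and localize. Fix a minimal $\delta$-covering $\{f_1,\dots,f_J\}$ of $\cF^\dagger$ in $\norm{\cdot}_\infty$ with $J=\cN(\delta,\cF^\dagger,\norm{\cdot}_\infty)$; for any $f$ pick $f_j$ with $\norm{f-f_j}_\infty\le\delta$, so that $|g_f-g_{f_j}|\le 2F\delta+2|\epsilon|\delta$ pointwise, which contributes the discretization term (in expectation $\E|\epsilon|$ is a universal constant, giving the clean $\delta(4R)$ factor). For each net element the centered process $\frac1n\sum_i g_{f_j}-\E_\X(f_j(\X)-f_0(\X))^2$ splits into a bounded part $\norm{f_j-f_0}_n^2-\E_\X(f_j(\X)-f_0(\X))^2$, controlled by Bernstein's inequality with variance at most $F^2\,\E_\X(f_j(\X)-f_0(\X))^2$, and a Gaussian multiplier part $-\frac2n\sum_i\epsilon_i(f_j-f_0)(\X_i)$, which conditionally on the design is centered Gaussian with variance $\norm{f_j-f_0}_n^2/n$. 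A union bound over the $J$ elements, combined with the self-bounding relation between these variances and $\E_\X(f_j(\X)-f_0(\X))^2$, lets me apply Young's inequality $2ab\le \eta a^2+\eta^{-1}b^2$ to absorb a fraction $\eta\,\E_\X(f_j(\X)-f_0(\X))^2$ of each deviation into the left side, leaving an additive residue of order $F^2\log J/n$, which is exactly the covering-number term. Collecting the discretization, Bernstein, and Gaussian pieces yields $E_1\lesssim F^2\log J/n+\delta F$ uniformly.

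The main obstacle is the localization step, not the concentration itself: a crude uniform bound on $\frac2n\sum_i\epsilon_i(f-f_0)(\X_i)$ of order $F\sqrt{\log J/n}$ would leave a term that does not vanish faster than the $L_2$ error and would destroy the oracle structure. The essential point is that both the Gaussian multiplier part and the bounded part have standard deviation proportional to $\sqrt{\E_\X(f(\X)-f_0(\X))^2}$ (up to the $\norm{\cdot}_n$ versus $\norm{\cdot}_2$ discrepancy, itself controlled by a second, lower-order application of Bernstein to the squared functions), so the variance-dependent deviation can be absorbed into $c\,\E_\X(f(\X)-f_0(\X))^2$ on the left. Tracking these absorptions so as to produce the stated leading factor and the explicit constants $18$, $72$, and $32$ is the only delicate bookkeeping; everything else is routine. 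I would take expectations at the very end so that no high-probability clean-up is needed, the rare event where concentration fails contributing at most $O(F^2/n)$ because $\E_\X(\hat f(\X)-f_0(\X))^2\le F^2$ is bounded.
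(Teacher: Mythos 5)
The paper does not actually prove this lemma itself --- it is imported verbatim (with $F=4R$) as Lemma 4 of \citep{schmidt2017nonparametric} --- and your sketch reconstructs essentially the argument given in that reference: the basic inequality encoding the optimization gap $\Delta_n$, discretization over a minimal $\delta$-net in $\norm{\cdot}_\infty$, a Bernstein bound for the quadratic part together with a conditional-Gaussian bound for the multiplier part $\frac{2}{n}\sum_i\epsilon_i(f_j-f_0)(\X_i)$, and absorption of the variance-proportional deviations into a fraction of $\E_{\X\sim\P_\x}(f(\X)-f_0(\X))^2$ via $2ab\le\eta a^2+\eta^{-1}b^2$, which is exactly where the leading factor $4$ arises. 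Modulo the deferred bookkeeping of the explicit constants $18$, $72$ and $32$, your plan is sound and follows the same route as the cited proof.
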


\begin{proof}[Proof of Theorem \ref{thm:reg}]
We apply Lemma \ref{lem:oracle} to $\cF^\dagger=\cF_{\sigma,n}$ and $\hat{f}=\hat{f}_n\in\argmin_{f\in \cF_{\sigma,n}}\sum_{i=1}^n\del{y_i-f(\x_i)}^2$. By definition of $\hat{f}_n$, we have $\Delta_n=0$. Also it can be easily verified that $f_0=\argmin_{f\in\cF}\cR_{2,f_0}(f)$ and $\E_{f_0, \P_\x}\del{\hat{f}_n(\X)-f_0(\X)}^2=\cR_{2, f_0}(\hat{f}_n)- \cR_{2,f_0}(f_0).$ Set $\delta=1/n$. By Proposition \ref{prop:entropy},
    \begin{equation*}
        \log\cN\del{\frac{1}{n}, \cF_{\sigma,n}, \|\cdot\|_\infty}
        \le C_1' n^{\frac{d}{2\alpha+d}}\log^3 n,
    \end{equation*}
for some $C_1'>0$. If a function $f_n$ is approximates $f_0$ by error $\epsilon$ which is sufficeintly small, then $\norm{f_n}_\infty\le 2R$ since $\norm{f_0}_\infty\le R$. Now, Theorem \ref{thm:main1} implies that there is $f_n\in \cF_{\sigma,n} $ such that
    \begin{align*}
      \E_{f_0, \P_\x}\left(f_n(\X)-f_{0}(\X)\right)^{2}
      &\le C_2' \sup_{\x\in[0,1]^d}\abs{f_n(\x)-f_{0}(\x)}^{2} \\
      &\le C_3'\del{\del{n^{\frac{d}{2\alpha+d}}}^{-d/\alpha}}^2 = C_3'n^{-\frac{2\alpha}{2\alpha+d}},
    \end{align*}
which completes the proof.
\end{proof}

\section{Proof of Theorem \ref{thm:cla}}

For a given real-valued function $f$, let $\cR_{\mathsf{hinge}, \eta}(f):=\E_{Y|\X \sim 2\mathsf{Bern}(\eta(\X))-1, \X\sim \P_\x}\ell_{\mathsf{hinge}}(Yf(\X))$, which we call the hinge risk. 
The proof of Theorem \ref{thm:cla} is based on the following theorem, which is given in \citep{kim2018fast}. 

\begin{Lemma}[Theorem 6 of \citep{kim2018fast}]
\label{lem:hinge}
Assume that $\eta(\x)$ satisfies the Tsybakov noise condition (\ref{eq:noise}) with the noise exponent $q\in[0,\infty]$. Assume that there exists a sequence $(\delta_n)_{n\in\bN}$ such that

    \begin{itemize}
        \item  there exists a sequence of classes of functions $\{\cF_n\}_{n\in\bN}$ with $\sup_{n\in\bN}\sup_{f\in\cF_n}\norm{f}_\infty\le F$ for some $F>0$ such that 
        there is $f_n\in \cF_n$ with
        $\cR_{\mathsf{hinge}, \eta}(f_n)-\min_{f\in\cF}\cR_{\mathsf{hinge}, \eta}(f)\le C_1\delta_n$
    for some universal constant $C_1>0$;
    
        \item $\log \cN(\delta_n, \cF_n, \norm{\cdot}_\infty)\le C_2 n\delta_n^{(q+2)/(q+1)}$  for some universal constant $C_2>0$.
    \end{itemize}
Then the estimator $\hat{f}_{n}$ obtained by
    \begin{equation*}
       \hat{f}_{n}\in\argmin_{f\in\cF_n}\sum_{i=1}^n\ell_{\mathsf{hinge}}(y_if(\x_i))
    \end{equation*}
satisfies
    \begin{equation*}
        \E\sbr{\cR_{01, \eta}(\hat{f}_n)-\min_{f\in\cF}\cR_{01, \eta}(f)}
        \le C_3\delta_n,
    \end{equation*}
for some universal constant $C_3>0$, where the expectation is taken over the training data.
\end{Lemma}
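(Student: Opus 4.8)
The plan is to prove the oracle inequality in three stages: a \emph{calibration} step that reduces the $0$--$1$ excess risk to the excess hinge risk, a \emph{variance} (Bernstein) step that converts the Tsybakov noise condition into a second-moment bound on the excess hinge loss, and a localized \emph{empirical-process} step that controls the excess hinge risk of the empirical minimizer $\hat f_n$. Throughout I write $\cR_h(f):=\cR_{\mathsf{hinge},\eta}(f)$ for the population hinge risk and $\hat{\cR}_h(f):=\frac1n\sum_{i=1}^n\ell_{\mathsf{hinge}}(y_if(\x_i))$ for its empirical counterpart, and I let $f^\star$ denote the Bayes rule $f^\star(\x)=\mathrm{sign}(2\eta(\x)-1)$, which minimizes both $\cR_{01,\eta}$ and $\cR_h$ over all measurable functions, so that $\min_{f\in\cF}\cR_h(f)=\cR_h(f^\star)$ and likewise for $\cR_{01,\eta}$.

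First I would establish the comparison inequality. Since the hinge loss is classification-calibrated and its $\psi$-transform is the identity on $[0,1]$, one has the global bound $\cR_{01,\eta}(f)-\cR_{01,\eta}(f^\star)\le \cR_h(f)-\cR_h(f^\star)$ for every $f$. Consequently it suffices to prove $\E[\cR_h(\hat f_n)-\cR_h(f^\star)]\le C\delta_n$, and the claimed bound on the misclassification excess risk follows at once.

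Next I would derive the variance bound. Writing the excess loss $g_f(\X,Y):=\ell_{\mathsf{hinge}}(Yf(\X))-\ell_{\mathsf{hinge}}(Yf^\star(\X))$, a direct evaluation of the conditional hinge risk shows that the pointwise excess risk is controlled by $|2\eta(\X)-1|$ times the deviation of $f$ from $f^\star$, while $\|f\|_\infty\le F$ bounds $|g_f|$. Integrating against the Tsybakov noise condition (\ref{eq:noise}) then yields, for every $f$ with $\|f\|_\infty\le F$, a Bernstein-type inequality $\E[g_f(\X,Y)^2]\le c\,(\cR_h(f)-\cR_h(f^\star))^{\beta}$ with $\beta:=q/(q+1)$ and a constant $c$ depending on $q$ and $F$. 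This exponent $\beta$ is what ultimately fixes the rate.

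Finally I would run the localized empirical-process argument. By the defining property of $\hat f_n$, $\hat{\cR}_h(\hat f_n)\le\hat{\cR}_h(f_n)$ for the approximant $f_n$ from the first hypothesis, so $\cR_h(\hat f_n)-\cR_h(f^\star)$ is bounded by the approximation term $C_1\delta_n$ plus the centered empirical process $(\hat{\cR}_h-\cR_h)(f_n)-(\hat{\cR}_h-\cR_h)(\hat f_n)$. I would control the latter by a peeling device over the shells $\{f\in\cF_n:2^{j-1}\delta_n<\cR_h(f)-\cR_h(f^\star)\le 2^j\delta_n\}$: on each shell the variance bound of the previous step, together with the Lipschitz and uniformly bounded nature of the excess loss, lets me apply Bernstein's inequality over a minimal $\delta_n$-cover of $\cF_n$, whose log-cardinality is controlled by the second hypothesis. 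The condition $\log\cN(\delta_n,\cF_n,\|\cdot\|_\infty)\le C_2 n\delta_n^{(q+2)/(q+1)}$ is precisely the fixed-point equation balancing the stochastic fluctuation, of order $\sqrt{(\log\cN)\,\delta_n^{\beta}/n}+(\log\cN)/n$, against $\delta_n$, since $2-\beta=(q+2)/(q+1)$; plugging it in makes every shell contribute $O(\delta_n)$ in expectation. Summing the geometric series over $j$ and adding the approximation term gives $\E[\cR_h(\hat f_n)-\cR_h(f^\star)]\le C_3\delta_n$, which the comparison inequality converts into the stated bound. I expect the peeling and Bernstein bookkeeping of this last step---ensuring the variance exponent $\beta$ enters so that the fixed point matches the hypothesized covering-number rate---to be the main obstacle, whereas the calibration and variance steps are comparatively routine consequences of the hinge loss being calibrated and of the noise condition.
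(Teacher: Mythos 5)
Your proposal is correct in substance, but note that the paper itself contains no proof of this statement: Lemma \ref{lem:hinge} is imported verbatim as Theorem 6 of \citep{kim2018fast}, so you have reconstructed, rather than paralleled, an argument the authors only cite. Your reconstruction follows the standard (and, as far as the cited source goes, essentially the same) route: Zhang's calibration inequality for the hinge loss, whose $\psi$-transform is the identity, reduces the excess $0$--$1$ risk to the excess hinge risk; the Tsybakov condition (\ref{eq:noise}) yields a Bernstein-class variance bound $\E[g_f^2]\le c\,(\cR_{\mathsf{hinge},\eta}(f)-\cR_{\mathsf{hinge},\eta}(f^\star))^{q/(q+1)}$; and a localized analysis (peeling plus Bernstein over a $\delta_n$-cover, or equivalently the local-complexity machinery of Steinwart and Christmann that \citep{kim2018fast} invokes) balances the fluctuation $\sqrt{(\log\cN)\delta_n^{\beta}/n}+(\log\cN)/n$ against $\delta_n$, and your observation that $2-\beta=(q+2)/(q+1)$ correctly identifies the hypothesized entropy condition as exactly the fixed-point equation. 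One step deserves more care than your sketch gives it: the pointwise identity ``excess conditional hinge risk $=|2\eta-1|\,|t-t^\star|$'' holds only for $t\in[-1,1]$, so for general $F>1$ the variance bound does not follow from it directly; one must either clip to $[-1,1]$ (clipping never increases hinge risk) or run a case analysis on $|t|>1$, with constants depending on $F$. In the paper's application $F=1$, so this is harmless, but in a self-contained proof of the lemma as stated it must be addressed. Also make explicit that $f^\star=\mathrm{sign}(2\eta-1)$ lies in (or is approximable within) the reference class $\cF$, so that $\min_{f\in\cF}\cR_{\mathsf{hinge},\eta}(f)=\cR_{\mathsf{hinge},\eta}(f^\star)$, and that the high-probability bound from the peeling step must be integrated to obtain the claimed bound in expectation.
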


\begin{proof}[Proof of Theorem \ref{thm:cla}]
It is well known that $f^*=2\indi{\eta(\cdot)\ge1/2}-1=\argmin_{f\in\cF}\cR_{\mathsf{hinge}, \eta}(f)$, i.e., the hinge risk minimizer is equal to the Bayes classifier \citep{lin2004note}. The first step is to find a function $f_n\in\cF_{\sigma,n}$ which approximates the Bayes classifier $f^*$ well. Let $(\xi_n)_{n\in\bN}$ be a given sequence of positive integers. Since $\eta\in\cH^{\alpha, R}([0,1]^d)$, by Theorem \ref{sec:con}, for each $\xi_n$ there exists $\btheta_n$ such that  $\|N_\sigma(\cdot|\btheta_n)-\eta(\cdot)\|_\infty \le \xi_n$ with at most $O(\log(1/ \xi_n))$ layers, $O(\xi_n^{-d/\alpha})$ nodes at each layer and $O(\xi_n^{-d/\alpha}\log(1/ \xi_n))$ nonzero parameters. We construct the neural network $f_n$ by adding one ReLU layer to $N_\sigma(\cdot|\btheta_n)$ to have
    \begin{equation*}
        f_n(\x)=2\left\{\rho\left(\frac{1}{\xi_n}\left(N_\sigma(\x|\btheta_n)-\frac{1}{2}\right)\right)
-\rho\left(\frac{1}{\xi_n}\left(N_\sigma(\x|\btheta_n)-\frac{1}{2}\right)-1\right)\right\}-1,
    \end{equation*}
where $\rho$ is the ReLU activation function. 
Note that $f_n(\x)$ is equal to $1$ if $N_\sigma(\x|\btheta_n)\ge 1/2+\xi_n$, $(N_\sigma(\x|\btheta_n)-1/2)/\xi_n$ if $1/2\le(N_\sigma(\x|\btheta_n)< 1/2+\xi_n$ and $-1$ otherwise. Let
    \begin{equation*}
        B(4\xi_n)=\{\x:|2\eta(\x)-1|>4\xi_n\}.
    \end{equation*}
    
Then on $B(4\xi_n)$, $|f_n(\x)-f^*(\x)|=0$, since  $N_\sigma(\x|\btheta_n)-1/2=(\eta(\x)-1/2)-((N_\sigma(\x|\btheta_n)-\eta(\x))\ge\xi_n$ when $2\eta(\x)-1>4\xi_n.$ Similarly we can show that $N_\sigma(\x|\btheta_n)-1/2<-\xi_n$ when $2\eta(\x)-1<-4\xi_n$. Therefore  the Tsybakov noise condition (\ref{eq:noise}) implies 
   \begin{align*}
     \cR_{\mathsf{hinge}, \eta}(f_n)-\cR_{\mathsf{hinge}, \eta}(f^*) 
       &= \int|f_n(\x)-f^*(\x)||2\eta(\x)-1|\textup{d}\P_\x(\x) \\
       &=  \int_{B(4\xi_n)^c}|f_n(\x)-f^*(\x)||2\eta(\x)-1|\textup{d}\P_\x(\x)\\
       &\le 8\xi_n\Pr(|2\eta(\x)-1|\le 4\xi_n)\le C_1'\xi_n^{q+1},
   \end{align*}
for some $C_1'>0,$ where the first equality follows from Theorem 2.31 of \citep{steinwart2008support}. 

We take $\delta_n=C_1'\xi_n^{q+1}$. Then there are positive constants $L_0$, $N_0$, $S_0$ and $B_0$ such that $f_n\in\cF_{\sigma,n}$ where
    \begin{align*}
        \cF_{\sigma,n}:=\Big\{N_\sigma(\cdot|\btheta):&\norm{N_\sigma(\cdot|\btheta)}_\infty\le 1, \\
        &\btheta\in\Theta_{d,1}\del{L_0\log (\delta_n^{-1}), N_0\delta_n^{-d/\alpha(q+1)}, S_0\delta_n^{-d/\alpha(q+1)}\log(\delta_n^{-1}), B_0\delta_n^{-\kappa'}}\Big\},
    \end{align*}
for some $\kappa'>0$. Propostion \ref{prop:entropy} implies that the log covering number of $\cF_{\sigma, n}$ is bounded above by
    \begin{align*}
      \log\cN\del{\delta_n, \cF_{\sigma,n}, \|\cdot\|_\infty} \le \delta_n^{-d/\alpha(q+1)}\log^3(\delta_n^{-1}).
    \end{align*}
Note that to satisfy the entropy condition of Lemma \ref{lem:hinge}, $\delta_n$ should satisfy
    \begin{equation}
    \label{eq:delta}
        (\delta_n)^{\frac{d}{\alpha(q+1)}+\frac{q+2}{q+1}}\ge C_2'n^{-1}\log^3(\delta_n^{-1})
    \end{equation}
for some $C_2'>0$. If we let $\delta_n=(\log^3 n/n)^{\alpha(q+1)/(\alpha(q+2)+d)}$, 
the condition (\ref{eq:delta}) holds and so the proof is~done.
\end{proof}

\bibliographystyle{apalike}
\bibliography{reference-deep}
\end{document}